%% file: main.tex
\documentclass[sigconf]{acmart}

\copyrightyear{2026}
\acmYear{2026}
\setcopyright{cc}
\setcctype{by}
\acmConference[KDD '26]{Proceedings of the 32nd ACM SIGKDD Conference on Knowledge Discovery and Data Mining V.1}{August 09--13, 2026}{Jeju Island, Republic of Korea}
\acmBooktitle{Proceedings of the 32nd ACM SIGKDD Conference on Knowledge Discovery and Data Mining V.1 (KDD '26), August 09--13, 2026, Jeju Island, Republic of Korea}
\acmPrice{}
\acmDOI{10.1145/3770854.3780240}
\acmISBN{979-8-4007-2258-5/2026/08}

\AtBeginDocument{%
  }

\usepackage{mathtools}
\newtheorem{proposition}{Proposition}[section]
\usepackage{algorithm}
\usepackage{algorithmic}
\usepackage{graphicx} 
\usepackage{multirow}
\usepackage{subcaption}
\usepackage{enumitem}
\usepackage{makecell}

\usepackage{cancel}
\usepackage{balance}

\begin{document}

\title{Balanced Anomaly-guided Ego-graph Diffusion Model for Inductive Graph Anomaly Detection}


\author{Chunyu Wei}
\affiliation{%
  \institution{Renmin University of China}
  \city{Beijing}
  \country{China}
}
\authornotemark[1]
\email{weichunyu@ruc.edu.cn}

\author{Siyuan He}
\affiliation{%
  \institution{Renmin University of China}
  \city{Beijing}
  \country{China}
}
\authornote{Both authors contributed equally to this research.}
\email{river0307@ruc.edu.cn}

\author{Yu Wang}
\affiliation{%
  \institution{Independent Researcher}
  \city{Beijing}
  \country{China}
}
\email{feather1014@gmail.com}

\author{Yueguo Chen}
\affiliation{%
  \institution{Renmin University of China}
  \city{Beijing}
  \country{China}}
\email{chenyueguo@ruc.edu.cn}
\authornote{Corresponding author. He works at BRAIN of RUC.}

\author{Yunhai Wang}
\affiliation{%
 \institution{Renmin University of China}
 \city{Beijing}
 \country{China}}
\email{cloudseawang@gmail.com}

\author{Bing Bai}
\affiliation{%
  \institution{Microsoft}
  \city{Beijing}
  \country{China}}
\email{baibing12321@163.com}

\author{Yidong Zhang}
\affiliation{%
  \institution{Renmin University of China}
  \city{Beijing}
  \country{China}
}
\email{ydzhang@ruc.edu.cn}

\author{Yong Xie}
\affiliation{%
 \institution{Nanjing University of Posts and Telecommunications}
 \city{NanJing}
 \country{China}}
\email{yongxie@njupt.edu.cn}

\author{Shunming Zhang}
\affiliation{%
  \institution{Renmin University of China}
  \city{Beijing}
  \country{China}
}
\email{szhang@ruc.edu.cn}

\author{Fei Wang}
\affiliation{%
  \institution{Cornell University}
  \city{Storrs}
  \country{United States}}
\email{few2001@med.cornell.edu}

\renewcommand{\shortauthors}{Chunyu Wei et al.}

\begin{abstract}
Graph anomaly detection (GAD) is crucial in applications like fraud detection and cybersecurity. Despite recent advancements using graph neural networks (GNNs), two major challenges persist. At the model level, most methods adopt a transductive learning paradigm, which assumes static graph structures, making them unsuitable for dynamic, evolving networks. At the data level, the extreme class imbalance, where anomalous nodes are rare, leads to biased models that fail to generalize to unseen anomalies. These challenges are interdependent: static transductive frameworks limit effective data augmentation, while imbalance exacerbates model distortion in inductive learning settings. To address these challenges, we propose a novel data-centric framework that integrates dynamic graph modeling with balanced anomaly synthesis. 
Our framework features: (1) a discrete ego-graph diffusion model, which captures the local topology of anomalies to generate ego-graphs aligned with anomalous structural distribution, and (2) a curriculum anomaly augmentation mechanism, which dynamically adjusts synthetic data generation during training, focusing on underrepresented anomaly patterns to improve detection and generalization. Experiments on five datasets demonstrate that the effectiveness of our framework. 
\end{abstract}

\begin{CCSXML}
<ccs2012>
   <concept>
       <concept_id>10002951.10003227.10003351</concept_id>
       <concept_desc>Information systems~Data mining</concept_desc>
       <concept_significance>500</concept_significance>
       </concept>
   <concept>
       <concept_id>10010147.10010257.10010258.10010260.10010229</concept_id>
       <concept_desc>Computing methodologies~Anomaly detection</concept_desc>
       <concept_significance>500</concept_significance>
       </concept>
   <concept>
       <concept_id>10010147.10010257.10010293.10010294</concept_id>
       <concept_desc>Computing methodologies~Neural networks</concept_desc>
       <concept_significance>300</concept_significance>
       </concept>
 </ccs2012>
\end{CCSXML}

\ccsdesc[500]{Information systems~Data mining}
\ccsdesc[500]{Computing methodologies~Anomaly detection}
\ccsdesc[300]{Computing methodologies~Neural networks}

\keywords{Graph Neural Networks, Anomaly Detection, Diffusion Model}



\maketitle
\newcommand\kddavailabilityurl{https://doi.org/10.1145/3770854.3780240}
\ifdefempty{\kddavailabilityurl}{}{
\begingroup\small\noindent\raggedright\textbf{Resource Availability:}\\
The source code of this paper has been made publicly available at \url{https://github.com/OaxKnud/BAED}.

\endgroup
}

\input{data/introduction}

\input{data/related_work}

\input{data/preliminaries}
\input{data/methodology}
\input{data/analysis}

\input{data/experiment}
\input{data/conclusion}

\begin{acks}
This research was supported by the National Key R\&D Program of China (No. 2023YFC3304701), NSFC (No.6250072448, No.62272466, U24A20233). It was
also supported by the Big Data and Responsible Artificial Intelligence for National Governance, Renmin University of China, and the Joint Academy on Future Humanity, Renmin University of China and Westlake University.
\end{acks}

\nocite{*}
\bibliographystyle{ACM-Reference-Format}
\balance
\bibliography{sample-base}

\appendix
\input{data/appendix}

\end{document}

%% file: data/introduction.tex
\section{Introduction}
\label{sec:introduction}

\begin{figure}[!t]
  \centering
  \includegraphics[width=0.99\linewidth]{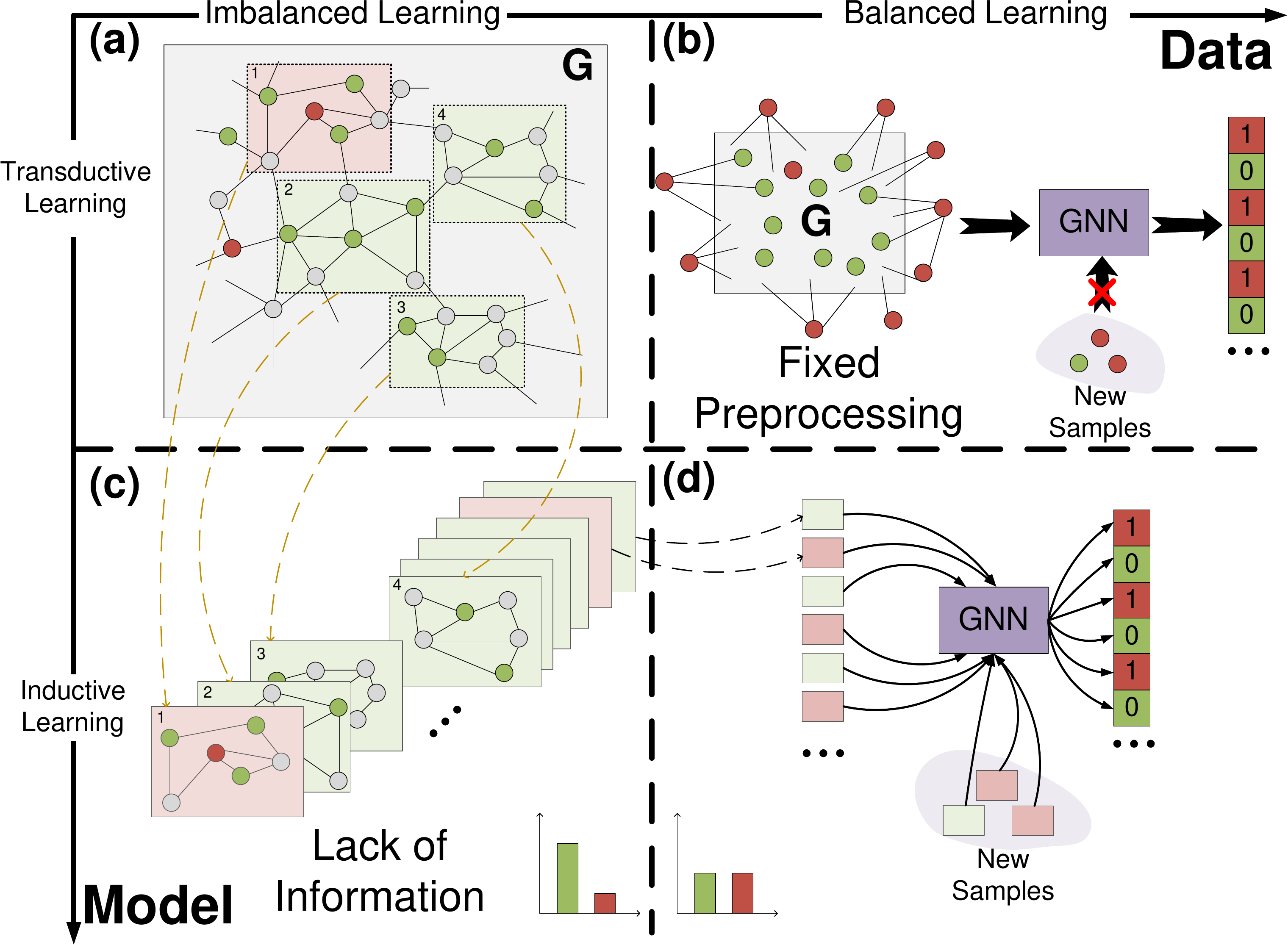}
  \caption{(a) \underline{Vanilla GAD paradigm}: transductive learning is performed on the whole graph with imbalanced labels.
(b) \underline{Graph-level data augmentation}: fixed preprocessing that require retraining the model whenever new samples are introduced, limiting adaptability during the training process.
(c) \underline{Inductive GAD on ego-graphs}: inductive learning on ego-graphs inherits the label imbalance issue from the original graph.
(d) \underline{Dynamic and balanced ego-graph augmentation}: our proposed inductive framework dynamically adjusts the type and ratio of generated samples during training to address imbalance and enhance model adaptability in real-time.}
  \label{fig:motivation}
\end{figure}

Graph anomaly detection (GAD) is essential in various applications, as networks are often modeled as graphs with nodes representing entities and edges representing relationships~\cite{DBLP:journals/tsc/WeiFJZ24,DBLP:conf/icann/ZhangWYFJ24}.
It has gained significant attention in fields like fraud detection~\cite{DBLP:conf/icdm/WangQL0JWFYZY19, DBLP:journals/eswa/MotieR24, DBLP:journals/tkde/ChengWZZ22}, social network analysis~\cite{DBLP:journals/sigkdd/YuQWLL16,DBLP:journals/tnsm/WeiFZ22,DBLP:journals/aei/ZhangYFZYW25}, and cybersecurity~\cite{DBLP:journals/tifs/WangZ22}. 
Unlike traditional anomaly detection tasks, where anomalies are typically identified by observing the features of individual nodes, GAD requires consideration not only of node attributes but also the graph structure~\cite{DBLP:conf/kdd/WeiHHWCB025,DBLP:conf/kdd/WeiLLDLW23}. 
This is because the relationships between nodes (i.e., the topology) often hold critical information that can reveal anomalous behaviors. For example, a node may disguise itself by modifying its features, but the interaction patterns represented by its topology can still highlight its anomalies.

Various Graph Neural Network (GNN)-based methods have been proposed to address graph anomaly detection by learning node representations that capture both feature and topological information. However, these methods face two main limitations: one at the \textbf{model level} and another at the \textbf{data level}, as illustrated in Figure~\ref{fig:motivation}(a).

At the \textbf{model level}, most approaches rely on transductive learning, where models are trained on a fixed set of labeled graph data, and the learned representations are subsequently used to detect anomalies. 
However, this approach assumes static graph structures, which is often unrealistic as real-world networks are dynamic~\cite{DBLP:conf/cikm/WeiLBL22}.
When the graph structure changes, the model must be retrained from scratch, a process that is both computationally expensive and impractical for large-scale or real-time applications~\cite{DBLP:journals/tsc/JiaFZWYW23,DBLP:journals/aei/JiaFWY23,DBLP:journals/tsc/WeiFZ23}. This limitation highlights a critical mismatch between the static assumptions of transductive learning and the inherently dynamic nature of real-world networks~\cite{DBLP:journals/tnsm/WeiFZJY24,DBLP:conf/www/WeiBBW22}.

At the \textbf{data level}, class imbalance is a significant challenge, as anomalous nodes are rare and their distribution is skewed. For instance, in the DGraph~\cite{huang2022dgraph}  dataset, only 1.3\% of nodes are anomalous. This imbalance leads to biased models that fail to capture the distinguishing characteristics of anomalous nodes, ultimately resulting in poor generalization during downstream prediction. 

These two challenges, transductive learning and imbalanced learning, are intrinsically linked. 
On one hand, the shift from transductive to inductive learning exacerbates the imbalance issue as illustrated in Figure~\ref{fig:motivation}(c).
This is because inductive learning requires generalization to unseen graph data, and the scarcity of anomalous nodes greatly hampers this ability. 
On the other hand, existing methods to address the imbalance through data synthesis often rely on pre-processing techniques, which do not dynamically adapt during model training, as illustrated in Figure~\ref{fig:motivation}(b). 
In transductive settings, the model's inability to adapt to graph structure changes exacerbates the complexity of the problem.

To simultaneously tackle these two challenges, we propose a novel data-centric framework,  \underline{B}alanced \underline{A}nomaly-guided \underline{E}go-graph \underline{D}iffusion (BAED) model, designed to address both the dynamic nature of graphs and the imbalance issue in GAD tasks, as illustrated in Figure~\ref{fig:motivation}(d). 
Specifically, we introduce a anomalous ego-graph synthesizer aimed at enhancing inductive graph anomaly detection by generating realistic, anomaly-specific ego-graphs that can be used to augment the training process. 
Our framework operates on the principle that effective anomaly detection requires not only a robust model but also the generation of informative samples that guide the model towards better generalization.

To achieve this, we focus on two key goals:

    \textbf{Better Fitting}: We propose a discrete ego-graph diffusion model that generates ego-graphs more aligned with the distribution of anomalous nodes. Unlike traditional diffusion models that map the entire graph to a latent space and overlook critical local neighborhood information, our model operates directly on the graph’s topology, preserving the local structure of anomalous nodes.
    
    \textbf{Better Adaptation}: 
    We introduce a curriculum anomaly augmentation mechanism that dynamically adjusts during training. Unlike traditional static pre-processing methods,
    our approach adapts to the training process itself. At each iteration, based on the model's previous loss, the synthesis process adjusts the distribution of generated anomaly samples to focus on underrepresented types of anomalies. 
    This enhances the model's ability to detect specific anomalies, improving both accuracy and generalization.

Our main contributions can be summarized as follows:
\begin{itemize}
    \item We propose a data-centric framework that simultaneously addresses the challenges of transductive learning and imbalanced learning in graph anomaly detection.
    \item We introduce a novel discrete ego-graph diffusion model that effectively generates realistic ego-graphs for curriculum anomaly augmentation, which enabling dynamic adaptation of the synthetic data generation during training.
    \item Through experiments on benchmarks for graph anomaly detection, BAED achieves state-of-the-art results. 
\end{itemize}

%% file: data/related_work.tex
\section{Related Works}
\label{sec:related_work}
\subsection{Graph Anomaly Detection}
With the advent of graph neural networks (GNNs)~\cite{DBLP:conf/iclr/XuHLJ19, DBLP:conf/kdd/WangC016, DBLP:conf/kdd/QiuTMDW018, DBLP:conf/nips/WeiLLW22, DBLP:conf/icml/WeiWBNBF23}, 
significant progress has been made in research on anomaly detection using node features and graph topology~\cite{DBLP:conf/sdm/DingLBL19,DBLP:journals/tnn/LiuLPGZK22,DBLP:conf/ijcai/PengLLLZ18}.
Traditional model-centric approaches predominantly rely on discriminative techniques, where representations are learned over the graph, followed by the construction of decision boundaries based on node embeddings to classify anomalies~\cite{DBLP:conf/kdd/WeiHHWCB025}. 
These methods often make use of partial label information and are typically framed in a semi-supervised or supervised learning setting. 
Notable examples include FdGars~\cite{DBLP:conf/www/WangWWHX19}, which employs multi-layer GNNs to classify users based on their content and behavior, and CARE-GNN~\cite{DBLP:conf/cikm/DouL0DPY20}, which adjusts neighbor aggregation thresholds via reinforcement learning to address inconsistencies. 
Similarly, approaches such as FRAUDRE~\cite{DBLP:conf/icdm/Zhang00BX0S21} and PC-GNN~\cite{DBLP:conf/www/LiuAQCFYH21} mitigate class imbalance by adopting specialized loss functions or label-balanced sampling strategies.

Spectral-based methods, such as BWGNN~\cite{DBLP:conf/icml/TangLGL22} and AMNet~\cite{DBLP:conf/ijcai/ChaiY0PXCJ22}, explore the graph’s frequency spectrum to capture anomalies, using techniques like Beta-graph wavelets and Bernstein polynomials to separate normal and abnormal frequency bands. 
In parallel, GHRN~\cite{DBLP:conf/www/GaoW0LF023} eliminates harmful heterogeneous edges to enhance model robustness, and SEC-GFD~\cite{DBLP:conf/aaai/XuWWWZW24} tackles heterophily and label imbalance through spectral filtering.

Unsupervised learning techniques, essential due to the scarcity of labeled data, have gained prominence in graph anomaly detection (GAD). These include traditional outlier detection methods like LOF~\cite{DBLP:conf/aaai/BandyopadhyayLM19} and Isolation Forest (IF)~\cite{DBLP:journals/tkdd/LiuTZ12}, as well as autoencoder-based models like MAPLE~\cite{DBLP:conf/pricai/SakuradaY14} and variational graph autoencoders (VGAEs)~\cite{DBLP:journals/corr/KipfW16a, DBLP:journals/tim/ZhangCXCNZHX23}, which improve anomaly detection by learning robust node representations. Structural similarity-based methods like SCAN~\cite{DBLP:conf/kdd/XuYFS07} and Radar~\cite{DBLP:conf/ijcai/LiDHL17} incorporate both attribute and structural information, while BOND~\cite{DBLP:conf/nips/LiuDZDHZDCPSSLC22} benchmarks various unsupervised anomaly detection methods.

To address the challenges of data scarcity and extreme imbalance in anomaly detection scenarios, various methods have adopted data augmentation strategies. For instance, some approaches enhance node features, generate contrastive views, or modify egonet structures~\cite{DBLP:conf/iclr/ParkSY22, DBLP:conf/wsdm/ZhaoZW21}, often tailored for specific algorithms~\cite{DBLP:conf/wsdm/ZhaoZW21, DBLP:journals/tkde/ZhengJLCPC23}. Recently, novel data-centric augmentation methods have emerged, directly tackling the issue of label scarcity by generating additional training data~\cite{DBLP:conf/kdd/0002LLDY024}. 
Our work falls into this category. 
However, while these methods generate additional node features, the graph's fixed topological structure, which is constrained by transductive approaches, remains unchanged throughout the augmentation process.
Our method is designed to address this issue by considering both dynamic graph structures and newly added nodes, ensuring better adaptability and generalization to GAD tasks.

\subsection{Inductive Leaning}
Inductive learning on graphs generalizes to unseen nodes or new graphs, essential for dynamic graph data.

\textbf{Network Embedding-Based Methods.}
Embedding methods extend static techniques like Skip-gram to dynamic settings, capturing temporal dynamics (e.g., Dynamic Skip-gram~\cite{DBLP:conf/ijcai/DuWSLW18}, temporal embeddings~\cite{DBLP:conf/www/NguyenLRAKK18, DBLP:conf/kdd/ZuoLLGHW18}). Autoencoder-based methods like DynGEM~\cite{DBLP:journals/corr/abs-1805-11273} and dyngraph2vec~\cite{DBLP:journals/kbs/GoyalCC20} excel in link prediction and anomaly detection but struggle to generalize to new graphs and lack scalability in highly dynamic environments.

\textbf{Graph Neural Network-Based Methods.}
GNNs support inductive learning by utilizing node features and structural information. Methods like GraphSAGE~\cite{DBLP:conf/nips/HamiltonYL17} introduce sampling-aggregation for new nodes and graphs, while TGAT~\cite{DBLP:conf/iclr/XuRKKA20} captures dynamic patterns with temporal encoding. Recent pre-training strategies (e.g., L2P-GNN~\cite{DBLP:conf/aaai/LuJ0S21}, graph-level pre-training~\cite{DBLP:conf/iclr/HuLGZLPL20}) improve inductive learning by learning transferable representations, though they still face challenges in aligning pre-training with fine-tuning objectives.

%% file: data/preliminaries.tex
\section{Preliminaries}
\label{sec:preliminaries}
\textbf{Attributed Graph.}
An \textit{attributed graph} is defined as a structure consisting of nodes, their associated features, and the relations (edges) between them. Formally, we represent an attributed graph as $\mathcal{G} = \{\mathcal{V}, \mathcal{E}, \mathbf{X}\}$, where:
\begin{itemize}[leftmargin=*]
    \item $\mathcal{V} = \{v_1, v_2, \ldots, v_N\}$ denotes the set of $N$ nodes.
    \item $\mathcal{E} = \{e_{ij} \mid v_i \text{ and } v_j \text{ are connected}, v_i, v_j \in \mathcal{V}\}$ represents the set of edges between nodes.
    \item $\mathbf{X} \in \mathbb{R}^{N \times d}$ represents the feature matrix of the graph, where each row $\mathbf{x}_i \in \mathbb{R}^d$ corresponds to the $d$-dimensional feature vector of node $v_i$.
\end{itemize}

To facilitate matrix-based computations, we define the graph's \textit{adjacency matrix} $\mathbf{A} \in \mathbb{N}^{N \times N}$, where each entry $\mathbf{A}_{ij}$ is binary:
{\small
\begin{equation}
\mathbf{A}_{ij} = 
\begin{cases} 
1, & \text{if there exists an edge } e_{ij} \in \mathcal{E}, \\
0, & \text{otherwise}.
\end{cases}
\end{equation}
}

\textbf{Ego-graph.}
An \textit{Ego-graph} is a specialized subgraph structure centered around a specific node $v_i$, referred to as the \textit{ego}, and its neighborhood $\mathcal{N}^{K}(i)$ within a predefined number of hops $K$. Formally, the node set of the ego-graph can be expressed as: $\mathcal{V}_{K}^{i} = \{v_i\} \cup \mathcal{N}^{K}(i)$,
where $\mathcal{N}^{K}(i)$ denotes the set of nodes reachable from $v_i$ within $K$ hops.
The ego-graph $\mathcal{G}_{K}^{i}$ is then defined as:
{\small
\begin{equation*}
\mathcal{G}_{K}^{i} = \big\{ \mathcal{V}_{K}^{i}, \{e_{jk} \mid e_{jk} \in \mathcal{E}, v_j, v_k \in \mathcal{V}_{K}^{i} \}, [x_k, \ldots]~\text{where}~v_k \in \mathcal{V}_{K}^{i} \big\}, 
\end{equation*}
}where $[x_k, \ldots]$ represents the set of feature vectors corresponding to each node $v_k \in \mathcal{V}_{K}^{i}$.

\textbf{Anomaly Detection on Graph.}
Given an \textit{attributed graph} $\mathcal{G}$, the node set $\mathcal{V}$ can be partitioned into two disjoint subsets, namely the anomalous node set $\mathcal{V}_a$ and the normal node set $\mathcal{V}_n$, such that: $\mathcal{V}_a \cap \mathcal{V}_n = \emptyset \quad \text{and} \quad \mathcal{V}_a \cup \mathcal{V}_n = \mathcal{V}$.

However, only a subset of nodes in the graph are labeled. Let $\mathcal{V}_{train} \subset \mathcal{V}$ denote the set of labeled nodes, and their corresponding labels are given as: $Y_{train} = \{y_i \mid y_i \in \{0, 1\}, v_i \in \mathcal{V}_{train} \}$, where $y_i = 1$ indicates that the node $v_i$ is anomalous ($v_i \in \mathcal{V}_a$), and $y_i = 0$ indicates that the node is normal ($v_i \in \mathcal{V}_n$).

The objective of \textit{graph anomaly detection} (GAD) is to infer the anomaly labels $Y_{test}$ of the unlabeled nodes $\mathcal{V}_{test} = \mathcal{V} \setminus \mathcal{V}_{train}$ by estimating the probability:
{\small
\begin{equation}
    p(v_i \in \mathcal{V}_a \mid \mathcal{G}, v_i \in \mathcal{V}_{test}),
\end{equation}
}which represents the likelihood that node $v_i$ is anomalous given the graph $\mathcal{G}$ and the partial node labels in $\mathcal{V}_{train}$.

\textbf{Inductive Anomaly Detection on Graph.}
Unlike traditional transductive approaches, which utilize the entire graph structure and node features, \textit{inductive anomaly detection} focuses on local information within the node’s ego-graph to compute the anomaly probability.
The goal of \textit{inductive graph anomaly detection} is to estimate the probability:
{\small
\begin{equation}
\label{equ:def_inductive}
    p(v_i \in \mathcal{V}_a \mid \mathcal{G}_{K}^{i}, v_i \in \mathcal{V}_{test}),
\end{equation}
}which represents the likelihood that the node $v_i$ is anomalous, conditioned only on its local ego-graph $\mathcal{G}_{K}^{i}$. 

%% file: data/methodology.tex
\begin{figure*}[!htbp]
  \centering
  \includegraphics[width=0.97\textwidth]{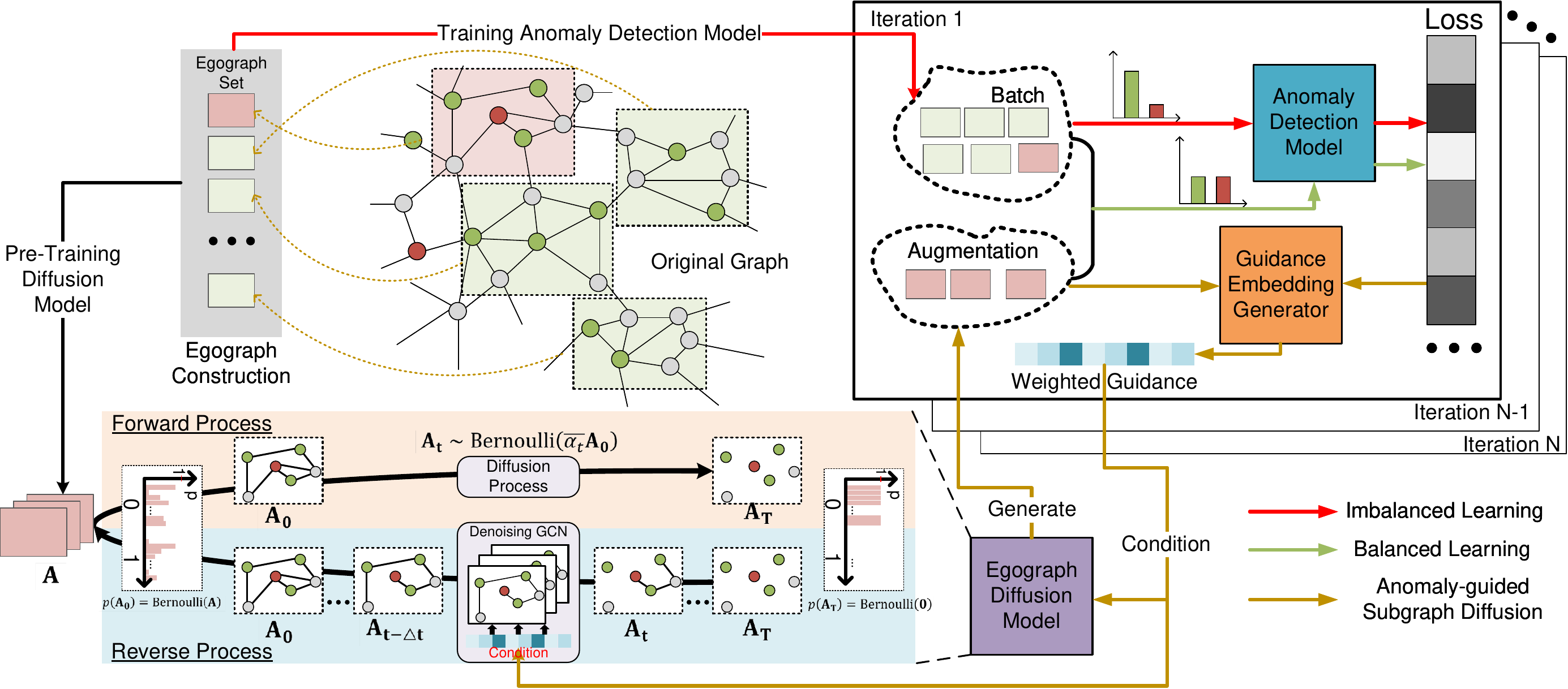}
    \caption{The BAED framework. \textbf{Left}: Pre-training the discrete ego-graph diffusion model with forward noise addition and reverse denoising processes. \textbf{Right}: Training iterations where the anomaly detection model processes both original imbalanced batch and augmented samples. The Guidance Embedding Generator (GIN) encodes anomalous ego-graphs into guidance embeddings, which are dynamically weighted based on previous losses to focus on underrepresented anomaly types. }
  \label{fig:framework}
\end{figure*}

\section{Methodology}
\label{sec:methodology}

We propose the Balanced Anomaly-guided Ego-graph Diffusion (BAED) framework, as shown in Figure~\ref{fig:framework}. The core idea is to use a discrete \textbf{Ego-graph Diffusion Model} to generate anomalous samples during the inductive GAD training, ensuring a balanced ratio of anomalous to normal samples. To improve adaptability, we introduce an \textbf{Anomaly-Guided Graph Generation} mechanism, controlled by a guidance embedding to align generated samples with specific anomaly patterns. Additionally, our \textbf{Curriculum Anomaly Augmentation} dynamically adjusts the anomaly patterns and proportions based on prior training results. Finally, we provide theoretical analyses demonstrating BAED's effectiveness.

\subsection{Inductive anomaly detection model}
In Equation~\ref{equ:def_inductive}, the goal is to estimate the anomaly probability $p(v_i \in \mathcal{V}_a \mid \mathcal{G}_{K}^{i}, v_i \in \mathcal{V}_{test})$. Our approach offers plug-and-play flexibility, allowing the adaptation of any graph operator-based transductive GNN. For training, we use the GAE (Graph Autoencoder) framework to train our selected GNN on the entire graph\cite{DBLP:conf/icml/WeiWBNBF23,DBLP:conf/icassp/FanZL20}. During inference, the model is applied only to the local ego-graph for node $v_i$: $p(v_i \in \mathcal{V}_a \mid \mathcal{G}_{K}^{i}) = GNN(\mathcal{G}_{K}^{i})$.

In the $l$-th layer of the GNN, node representations are updated based on node features and graph topology:
{\small
\begin{equation}
    \mathbf{E}^{(l)} = GCN(\mathbf{E}^{(l-1)}, \mathcal{G}_{K}^{i}),
\end{equation}
}
where $\mathbf{E}^{(l)}$ is the node embedding at layer $l$, and $GCN(\cdot)$ is the graph convolution operation. Initially, $\mathbf{E}^{(0)} = [x_k, \ldots]$, where $v_k \in \mathcal{V}_{K}^{i}$.
After $L$ layers of convolution, the final node representations for the ego-graph $\mathcal{G}_{K}^{i}$ are aggregated:
{\small
\begin{equation}
    \mathbf{H}_{K}^{i} = AGG(\mathbf{E}^{(1)}, \mathbf{E}^{(2)}, \ldots, \mathbf{E}^{(L)}),
\end{equation}
}where $AGG(\cdot)$ can be implemented using functions like mean or concatenation.

To improve anomaly detection, we propose a deviation-based representation for the final input:
{\small
\begin{equation}
    h_{\mathcal{G}_{K}^{i}} = \mathbf{H}_{K}^{i}[v_i] - mean\_pooling(\mathbf{H}_{K}^{i}),
\end{equation}
}where $\mathbf{H}_{K}^{i}[v_i]$ is the node embedding of $v_i$, and $mean\_pooling(\cdot)$ is the mean of all node embeddings. This deviation highlights differences between the ego node $v_i$ and its neighbors, reflecting its anomaly degree.

\begin{proposition}
The magnitude of the deviation $\|h_{\mathcal{G}_{K}^{i}}\|_2$ increases as the central node $v_i$ exhibits abnormal behavior compared to its local neighborhood. Formally, given the ego-graph $\mathcal{G}_{K}^{i}$, the deviation satisfies:
{\small
\begin{equation}
    \|h_{\mathcal{G}_{K}^{i}}\|_2 \propto \delta(v_i, \mathcal{N}^{K}(i)),
\end{equation}
}where $\delta(v_i, \mathcal{N}^{K}(i))$ denotes the degree of deviation of $v_i$ from the distribution of its neighbors $\mathcal{N}^{K}(i)$. 
\end{proposition}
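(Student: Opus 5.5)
The statement is informal, so the plan is first to make it precise and then verify it by a short computation. Let $n = |\mathcal{V}_{K}^{i}| = 1 + |\mathcal{N}^{K}(i)|$ and write $\mathbf{h}_k := \mathbf{H}_{K}^{i}[v_k]$ for the aggregated embedding of node $v_k$. We define the deviation measure as
\[
\delta(v_i, \mathcal{N}^{K}(i)) := \bigl\| \mathbf{h}_i - \bar{\mathbf{h}}_{\mathcal{N}} \bigr\|_2,
\qquad
\bar{\mathbf{h}}_{\mathcal{N}} = \frac{1}{n-1}\sum_{v_k \in \mathcal{N}^{K}(i)} \mathbf{h}_k .
\]
This is the distance of the ego embedding from the empirical mean of its neighbourhood's embedding distribution. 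It is the natural first-moment notion of ``deviation from the distribution of its neighbors.'' Under this reading, $\propto$ should come out as an exact linear relation with a constant that depends only on the ego-graph size.

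The key step is an algebraic identity. The mean over all nodes of the ego-graph splits into the ego term and the neighbour terms:
\[
\mathrm{mean\_pooling}(\mathbf{H}_{K}^{i}) = \frac{1}{n}\mathbf{h}_i + \frac{n-1}{n}\bar{\mathbf{h}}_{\mathcal{N}} .
\]
Substituting this into the definition of $h_{\mathcal{G}_{K}^{i}}$ gives
\[
h_{\mathcal{G}_{K}^{i}} = \frac{n-1}{n}\bigl(\mathbf{h}_i - \bar{\mathbf{h}}_{\mathcal{N}}\bigr).
\]
Taking norms yields $\|h_{\mathcal{G}_{K}^{i}}\|_2 = \frac{n-1}{n}\,\delta(v_i,\mathcal{N}^{K}(i))$. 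This is exact proportionality with constant $\frac{n-1}{n} \in [\tfrac12, 1)$ whenever $n \ge 2$. It is monotone increasing in $\delta$ for any fixed neighbourhood size, which is the claimed statement.

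The main obstacle is interpretive rather than computational. The paper presumably means deviation in terms of raw behaviour or features, not learned embeddings. To bridge this, I would add a Lipschitz argument. Each GCN layer is a normalized propagation followed by a linear map and a Lipschitz activation, and $AGG$ is mean or concatenation. Together these make the map from raw features to $\mathbf{h}_i - \bar{\mathbf{h}}_{\mathcal{N}}$ Lipschitz in the feature deviation $\mathbf{x}_i - \bar{\mathbf{x}}_{\mathcal{N}}$.

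For the linearised (activation-free) model, I would show the following. When all neighbours share a common feature and the ego feature is perturbed by $\Delta$, the embedding gap is $M\Delta$ for a fixed matrix $M$ determined by the weights and propagation. Hence $\|h\|_2$ grows linearly in $\|\Delta\|$ along any direction not in $\ker M$.

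I would present this as the justification that the embedding deviation faithfully tracks feature deviation. The exact identity above would remain the core of the proof, since it holds regardless of the GNN used.
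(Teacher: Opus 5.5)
The paper gives no proof of this proposition: $\delta$ is never defined, and the only support is the remark that the deviation ``highlights differences between the ego node $v_i$ and its neighbors.'' There is therefore no argument of the paper's to compare against. Your mean-splitting identity
\[
h_{\mathcal{G}_{K}^{i}}=\frac{n-1}{n}\bigl(\mathbf{h}_i-\bar{\mathbf{h}}_{\mathcal{N}}\bigr)
\]
is correct and already goes beyond what the paper supplies. You should state the conventions it relies on: $v_i\notin\mathcal{N}^{K}(i)$, and $n\ge 2$, since for an isolated ego $h_{\mathcal{G}_{K}^{i}}=0$ while $\bar{\mathbf{h}}_{\mathcal{N}}$ is undefined. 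Be explicit, though, that with $\delta$ measured in the same embedding space the result is essentially definitional. The identity holds for any vectors and uses nothing about the GNN, so all the substantive content would have to come from your bridge to raw features.

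That bridge has a real gap for the backbones the paper actually uses. First, $\mathbf{h}_i-\bar{\mathbf{h}}_{\mathcal{N}}$ depends on the whole feature matrix and on the topology, not on $\mathbf{x}_i-\bar{\mathbf{x}}_{\mathcal{N}}$ alone. So ``Lipschitz in the feature deviation'' is not well posed; you only get Lipschitz in $\mathbf{X}$, and an upper bound cannot give ``increases with'' in any case.

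Second, your linearised claim that the gap equals $M\Delta$ when all neighbours share a feature $\mathbf{c}$ silently requires the propagation operator to fix constant signals ($P\mathbf{1}=\mathbf{1}$), so that $\mathbf{c}$ and the bias terms cancel between ego and neighbours. That holds for mean-type aggregation (e.g.\ GraphSAGE-mean). It fails for the symmetrically normalised GCN operator $\tilde{\mathbf{D}}^{-1/2}\tilde{\mathbf{A}}\tilde{\mathbf{D}}^{-1/2}$ and for the normalised-Laplacian filters behind BWGNN and BernNet. On a non-regular ego-graph these turn a constant signal into a degree-dependent one, so the gap is $\mathbf{o}(\mathbf{c})+M\Delta$ with a structural offset.

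Concretely, take a one-layer GCN with self-loops, scalar features, weight $1$ and no bias, on the star centred at $v_i$ with four leaves.
\begin{itemize}
\item Constant features $c>0$ give $\mathbf{h}_i-\bar{\mathbf{h}}_{\mathcal{N}}=c\bigl(\tfrac15+\tfrac{4}{\sqrt{10}}-\tfrac12-\tfrac{1}{\sqrt{10}}\bigr)\approx 0.65c$.
\item Raising the ego feature by $\Delta\ge 0$ adds $\bigl(\tfrac15-\tfrac{1}{\sqrt{10}}\bigr)\Delta\approx-0.12\Delta$. All pre-activations stay positive, so a ReLU changes nothing.
\end{itemize}
Hence $\|h_{\mathcal{G}_{K}^{i}}\|_2$ \emph{decreases} as the ego's feature deviation grows, and it vanishes at $\Delta\approx 5.6c$. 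Under the feature-level reading the proposition is therefore false in general. Keep the embedding-space identity as the proof. Then either restrict the feature-level link to constant-preserving aggregators, where your $M\Delta$ computation is right, or state it only as an upper bound.
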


Thus, the deviation operation provides a monotonically increasing measure of node $v_i$'s anomaly degree within its local ego-graph. We then use a Multi-Layer Perceptron (MLP)-based discriminator to map the representation $h_{\mathcal{G}_{K}^{i}}$ to a probability in the range $[0, 1]$, representing the likelihood that node $v_i$ is anomalous:
{\small
\begin{equation}
    p(v_i \in \mathcal{V}_a \mid \mathcal{G}_{K}^{i}) = MLP(h_{\mathcal{G}_{K}^{i}}).
\end{equation}
}

\subsection{Ego-graph Diffusion Model}
We focus on generating ego-graphs by manipulating the adjacency matrix through a discrete diffusion process. The process begins with the initial adjacency matrix $\mathbf{A}^0$, and noise is incrementally added at each step, represented as $q(\mathbf{A}^t \mid \mathbf{A}^{t-1})$, forming a trajectory $(\mathbf{A}^0, \mathbf{A}^1, \ldots, \mathbf{A}^T)$. The diffusion process models the joint distribution $q(\mathbf{A}^1, \ldots, \mathbf{A}^T \mid \mathbf{A}^0) = \prod_{t=1}^{T}q(\mathbf{A}^t \mid \mathbf{A}^{t-1})$. To model the reverse process, we introduce a denoising process $p_{\theta}(\mathbf{A}^{t-1} \mid \mathbf{A}^{t})$, parameterized by $\theta$, which is modeled as:
{\small
\begin{equation}
    p_{\theta}(\mathbf{A}^{0:t}) = p(\mathbf{A}^T) \prod_{t=1}^{T} p_{\theta}(\mathbf{A}^{t-1} \mid \mathbf{A}^t),
\end{equation}
}where $p(\mathbf{A}^T)$ is the distribution the adjacency matrix converges to after the forward diffusion.
An efficient diffusion model must satisfy three key properties:
\subsubsection{Closed-form Conditional Distribution}  
The conditional distribution $q(\mathbf{A}^t \mid \mathbf{A}^0)$ should have a closed-form solution to avoid recursive noise application during training. Following~\cite{DBLP:conf/iclr/VignacKSWCF23,DBLP:conf/icml/ChenHH023}, we use $\mathcal{B}(x ; \mu)$ to denote a Bernoulli distribution with parameter $\mu$. Each entry of the adjacency matrix, representing an edge, is treated as an independent random variable, with $\mathcal{B}(\mathbf{A} ; \mathbf{\mu})$ representing the entire adjacency matrix.

We define the forward process as resampling with probability $\beta_t$, following a Bernoulli distribution with parameter $p$. Thus, $q(\mathbf{A}^t \mid \mathbf{A}^{t-1})$ is an independent Bernoulli distribution for each edge:
{\small
\begin{equation}
\begin{aligned}
   \label{equ:forward}
   q\left(\mathbf{A}^t \mid \mathbf{A}^{t-1}\right) &= \prod_{i, j: i<j} \mathcal{B}\left(\mathbf{A}_{i, j}^t ; \left(1-\beta_t\right) \mathbf{A}_{i, j}^{t-1} + \beta_t p \right) \\
   &:= \mathcal{B}\left(\mathbf{A}^t ; \left(1-\beta_t\right) \mathbf{A}^{t-1} + \beta_t p \right).
\end{aligned}
\end{equation}
}This leads to: 
{\small
\begin{equation}
\label{equ:add_noise}
   q\left(\mathbf{A}^t \mid \mathbf{A}^0\right) = \mathcal{B}\left(\mathbf{A}^t ; \bar{\alpha}_t \mathbf{A}^0 + (1-\bar{\alpha}_t) p \right),
\end{equation}
}where $\alpha_t = 1 - \beta_t$ and $\bar{\alpha}_t = \prod_{\tau=1}^t \alpha_\tau$.

\subsubsection{Limiting Distribution}  
The second key property is that the limiting distribution $q_{\infty} = \lim_{T \to \infty} q(\mathbf{A}^T \mid \mathbf{A}^0)$ should be independent of $\mathbf{A}^0$, enabling its use as a prior. To achieve this, we use noise scheduling~\cite{DBLP:conf/nips/HoJA20} to adjust $\beta_1, \ldots, \beta_t$ such that $\bar{\alpha}_t$ approaches 0 as $t$ increases. This ensures $\lim_{T \to \infty} q\left(\mathbf{A}^T \mid \mathbf{A}^0\right) = \mathcal{B}\left(\mathbf{A}^T ; p\right) = q\left(\mathbf{A}^T\right)$, where $q(\mathbf{A}^T)$ is independent of $\mathbf{A}^0$. For simplicity, we set $p=0$.

\subsubsection{Closed-form Posterior Distribution}
   The third key property is that the posterior distribution $q(\mathbf{A}^{t-1} \mid \mathbf{A}^t, \mathbf{A}^0)$ should have a closed-form solution to serve as the target for the neural network in the reverse denoising process. The posterior of the forward transition conditioned on $\mathbf{A}^0$ can be computed as:
   {\small
   \begin{equation}
   q\left(\mathbf{A}^{t-1} \mid \mathbf{A}^t, \mathbf{A}^0\right) = \frac{q\left(\mathbf{A}^t \mid \mathbf{A}^{t-1}\right) q\left(\mathbf{A}^{t-1} \mid \mathbf{A}^0\right)}{q\left(\mathbf{A}^t \mid \mathbf{A}^0\right)}.
   \end{equation}
   }Since the posterior are trackable, it can be used to train the denoising model $p_\theta\left(\mathbf{A}^{t-1} \mid \mathbf{A}^t\right)$ by maximizing the evidence lower bound (ELBO) on the log-likelihood $\log p_\theta\left(\mathbf{A}^0\right)$.

\subsection{Anomaly-guided graph generation}
\label{sec:anomaly_guided}
Directly applying the denoising reverse process from a random noise distribution $\mathcal{B}\left(\mathbf{A}^T ; p\right)$ generates data that aligns with the original anomalous distribution. However, real-world anomalies often follow a multimodal distribution due to diverse anomaly patterns~\cite{DBLP:journals/jbd/RehmanB21}. When aiming to generate samples around a specific mode (e.g., a particular anomaly pattern) to enhance model training, a random reverse process is insufficient. To address this, we propose an \textbf{Anomaly-Guided Graph Generation} framework to generate augmented samples for the desired anomaly pattern.

For a given anomalous sample represented by an ego-graph $\mathcal{G}_{K}^{i}$, we hypothesize that a suitable graph embedding can capture its anomalous characteristics.

\begin{proposition}
\textbf{Distributional Separability of Local Ego-graph Embedding Space.} For node embeddings $\mathbf{z}_v$ in the embedding space, if the Ego-graph distributions $p(S_v)$ of different anomaly pattern satisfy separability conditions, then the embedding distribution $p(\mathbf{z}_v)$ also satisfies:
{\small
\begin{equation}
D_{KL}(p(\mathbf{z} \mid C_i) \parallel p(\mathbf{z} \mid C_j)) > \delta, \quad \forall i \neq j,
\end{equation}
}where $\delta > 0$ is the threshold for distributional separability. 
\end{proposition}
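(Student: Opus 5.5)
The natural route is to read the statement as a transfer of separability from the input space (ego-graph distributions $p(S_v \mid C_i)$) to the embedding space through the encoder $\mathbf{z}_v = f(S_v)$. The guidance encoder is a GIN, and the embedding itself is a deterministic map of the ego-graph. The ``separability conditions'' on $p(S_v)$ have to be made concrete. I would assume a margin-type condition: the supports, or at least most of the mass, of $p(S_v \mid C_i)$ and $p(S_v \mid C_j)$ lie in disjoint regions of ego-graph space. Concretely, there is a measurable set $R_{ij}$ with $P(S_v \in R_{ij} \mid C_i) \ge 1-\epsilon$ and $P(S_v \in R_{ij} \mid C_j) \le \epsilon$ for small $\epsilon$.

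The key tool is the data-processing inequality in the form that lower-bounds KL through binary events. For any measurable set $B$ in embedding space,
\begin{equation*}
D_{KL}(p(\mathbf{z} \mid C_i) \parallel p(\mathbf{z} \mid C_j)) \ge d_{\mathrm{KL}}\big(P_i(B) \,\|\, P_j(B)\big),
\end{equation*}
where $d_{\mathrm{KL}}(a\|b)$ is the binary KL divergence. The inequality itself goes the wrong way for a pushforward: it only gives $D_{KL}$ on $\mathbf{z}$ at most $D_{KL}$ on $S_v$. So I need a property of $f$ guaranteeing that the separating event survives the encoding. Here I would invoke GIN expressiveness. GIN is as powerful as the 1-WL test, so with injective aggregation it maps 1-WL-distinguishable ego-graphs to distinct embeddings. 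I would therefore strengthen the hypothesis slightly: the separating region $R_{ij}$ is a union of 1-WL equivalence classes. In that case $f$ is injective on the quotient, and $B = f(R_{ij})$ satisfies $P_i(B) \ge 1-\epsilon$ and $P_j(B) \le \epsilon$.

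Plugging these into the binary KL gives
\begin{equation*}
D_{KL} \ge (1-\epsilon)\log\frac{1-\epsilon}{\epsilon} + \epsilon\log\frac{\epsilon}{1-\epsilon} = (1-2\epsilon)\log\frac{1-\epsilon}{\epsilon} =: \delta,
\end{equation*}
which is strictly positive whenever $\epsilon < 1/2$. This proves the claim with an explicit $\delta$. An alternative, for continuous embeddings, is to assume $f$ is bi-Lipschitz on the relevant support. A margin $\gamma$ in ego-graph space then becomes a margin $\gamma/L$ in $\mathbf{z}$-space, and the same event-based bound applies. I would mention this as a remark.

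The main obstacle is the middle step: justifying that the learned encoder does not collapse separating structure, since the pushforward can only lose divergence. Injectivity of a trained finite-width GIN holds only on a countable or finite set of ego-graphs, and only up to 1-WL equivalence. I would handle this by stating the result under the explicit assumption that the anomaly patterns are 1-WL-distinguishable with margin $\epsilon$. I would then appeal to the standard existence result for injective multiset functions in GIN over a bounded-degree, finite-feature domain. Under those assumptions the rest is routine.
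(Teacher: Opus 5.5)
The paper gives no proof of this proposition. It is introduced as a hypothesis (``we hypothesize that a suitable graph embedding can capture its anomalous characteristics''), and the ``separability conditions'' are never defined. The only support is the informal remark after the statement: GIN matches the power of the WL test and therefore captures structural differences between ego-graphs.

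Your proposal is correct and rests on that same ingredient, but it turns the remark into an actual argument.
\begin{itemize}
\item You make the hypothesis concrete as an $\epsilon$-margin event $R_{ij}$ that is a union of 1-WL classes.
\item You correctly note that a pushforward can only lose divergence, so the lower bound must come from an event that survives the encoder.
\item The binary-event form of data processing then gives the explicit threshold $\delta=(1-2\epsilon)\log\frac{1-\epsilon}{\epsilon}$.
\end{itemize}
What this buys over the paper is a precise account of what the claim needs: 1-WL-distinguishable patterns and no collapse by the encoder. It also exposes something the paper glosses over, and you should say it in your statement. Your result is an existence statement about an injective GIN over countable features. It says nothing about the trained GIN on continuous node features that the paper actually uses.

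Two smaller points. First, when you plug in $P_i(B)\ge 1-\epsilon$ and $P_j(B)\le\epsilon$, you are using that $d_{\mathrm{KL}}(a\,\|\,b)$ is increasing in $a$ and decreasing in $b$ on the region $a>b$; state this. Second, the statement says the embedding distribution ``also'' satisfies $D_{KL}>\delta$, which suggests the hypothesis is the same KL threshold on the ego-graph side. Under that reading there is a shorter route. Impose the threshold on the laws of the 1-WL classes and assume the encoder is injective on those classes. KL is unchanged by an injective pushforward, so the same $\delta$ transfers directly, with no margin assumption and no reduction to a binary event.
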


Graph Isomorphism theory~\cite{DBLP:conf/iclr/XuHLJ19} demonstrates that GNNs can approximate the ability of graph isomorphism testing. They aggregate information from neighboring nodes to capture the structural properties of graphs. This implies that embedding a node's local neighborhood can effectively capture subtle structural differences. Consequently, we employ the Graph Isomorphism Network (GIN) as an encoder to model the anomalous sample into a guidance embedding. Formally, we define: $\mathbf{h}^{g} = \text{GIN}(\mathcal{G}_{K}^{i})$, where $\mathbf{h}^{g} \in \mathbb{R}^d$.

\paragraph{Conditional Reverse Process}
Given the noise addition process defined in Equation~\ref{equ:forward}, we define the discrete conditional denoising reverse process as:
{\small
\begin{equation}
p_\theta\left(\mathbf{A}^{t-1} \mid \mathbf{A}^t, \mathbf{h}^{g} \right)
= \prod_{i, j: i<j} p_\theta\left(\mathbf{A}^{t-1}_{i,j} \mid \mathbf{A}^t_{i,j}, \mathbf{h}^{g} \right).
\end{equation}
}

\paragraph{Training the Denoising Network}
Following~\cite{DBLP:conf/icml/Sohl-DicksteinW15, DBLP:conf/nips/HoJA20}, we optimize the denoising model $p_\theta\left(\mathbf{A}^{t-1} \mid \mathbf{A}^t, \mathbf{h}^{g}\right)$ by maximizing the evidence lower bound (ELBO) of the log-likelihood $\log p_\theta\left(\mathbf{A}^0 \mid \mathbf{h}^{g} \right)$. The ELBO is formulated as:
{\small
\begin{equation}
\log p_\theta\left(\mathbf{A}^0 \mid \mathbf{h}^{g} \right) \geq \mathbb{E}_{q(\mathbf{A}^{1:T} \mid \mathbf{A}^0, \mathbf{h}^{g})} \left[ \log \frac{p_\theta (\mathbf{A}^{0:T} \mid \mathbf{h}^{g})}{q(\mathbf{A}^{1:T} \mid \mathbf{A}^0, \mathbf{h}^{g})} \right].
\end{equation}
}This leads to the following training loss:
{\small
\begin{equation}
\label{equ:loss}
\begin{aligned}
&\quad \mathcal{L}\left(\mathbf{A}^0 ; \theta\right) \\
&= \mathbb{E}_{q}\left[-\log p_\theta (\mathbf{A}^0 \mid \mathbf{A}^1, \mathbf{h}^{g})\right] + \mathbb{D}_{KL} \left( q(\mathbf{A}^T \mid \mathbf{A}^0, \mathbf{h}^{g}) \parallel p(\mathbf{A}^T \mid \mathbf{h}^{g}) \right)  \\
&+ \sum_{t=2}^{T} \mathbb{E}_{q}\left[\mathbb{D}_{KL} \left( q(\mathbf{A}^{t-1} \mid \mathbf{A}^t, \mathbf{A}^0, \mathbf{h}^{g}) \parallel p_\theta (\mathbf{A}^{t-1} \mid \mathbf{A}^t, \mathbf{h}^{g}) \right) \right].
\end{aligned}
\end{equation}
}

Here, the first term is the reconstruction loss, denoted as $\mathcal{L}_{recon}$; the second term is the prior matching loss, $\mathcal{L}_{prior}$; and the third term is the consistency loss, $\mathcal{L}_{con}$. 

\paragraph{Anomaly-Guided Inference}

After training the denoising network $p_\theta\left(\mathbf{A}^{t-1} \mid \mathbf{A}^t, \mathbf{h}^{g}\right)$, the inference process guided by a specific guidance embedding $\mathbf{h}^{g}$ is outlined in Algorithm~\ref{algo:inference}.

\begin{algorithm}
\caption{Inference for Anomaly Detection}
\label{algo:inference}
\begin{algorithmic}[1]
\STATE Initialize $\mathbf{A}_T \sim \mathcal{B}\left(\mathbf{A}^T ; p\right)$, the Anomaly-Guidance Embedding $\mathbf{h}^{g}$, and diffusion steps $T$.
\FOR {$t = T$ to $1$}
    \STATE Draw $\mathbf{A}_{t-1} \sim p_\theta\left(\mathbf{A}^{t-1} \mid \mathbf{A}^t, \mathbf{h}^{g} \right)$.
\ENDFOR
\RETURN $\mathbf{A}_0$.
\end{algorithmic}
\end{algorithm}

This anomaly-guided framework enables the generation of augmented samples that focus on specific anomalous patterns, improving the robustness and diversity of anomaly detection models.

\subsection{Curriculum Anomaly Augmentation}

In Section~\ref{sec:anomaly_guided}, we introduced how the Ego-graph Diffusion Model generates Ego-graphs representing specific anomaly patterns guided by a guidance embedding. However, during training, the learning ability for different anomaly patterns varies. Thus, we propose \textbf{Curriculum Anomaly Augmentation} strategy, which dynamically adjusts the generation of guidance embeddings to focus on specific anomaly patterns. This enables the anomaly-guided graph generation to adaptively generate augmented samples that match the model's curriculum learning needs at each training iteration.

Based on the principle above, we use the loss values to compute importance weights for the guidance embeddings. As discussed in Section~\ref{sec:anomaly_guided}, we use the GIN encoder to compute the guidance embedding for each anomalous ego-graph 
$\mathcal{G}_K^i$ during training iteration $t$. The importance weight for each embedding is computed using a heuristic sigmoid function:
{\small
\begin{equation}
\label{equ:guide_weight}
w_i = \frac{1}{1 + e^{-\alpha (l_i - \beta)}} \cdot h(t),
\end{equation}
}where $\alpha$ and $\beta$ are hyperparameters that control the sensitivity and shift of the weight in response to the loss values $l_i$ (the anomaly detection loss of the $i$-th sample in the batch). The term $h(t)$ is a temporal factor, designed as a monotonically increasing function of the training iteration: $h(t) = \frac{t}{T}$, where $T$ is the total number of training iterations. This design ensures that the generation model focuses less on hard-to-learn samples in the early training stages, gradually shifting its attention to these underfit samples in later stages.

Using the computed weights, we aggregate the guidance embeddings for all samples in the batch. The final guidance embedding $\tilde{\mathbf{h}}^{g}$ for the anomaly-guided generation is computed as:
{\small
\begin{equation}
\label{eq:embedding calculate}
\tilde{\mathbf{h}}^{g} = \sum_{i=1}^N \frac{{w}_i}{\sum_{j=1}^N {w}_j} \cdot \text{GIN}(\mathcal{G}_K^i),
\end{equation}
}where ${w}_i$ is the normalized weight for the $i$-th sample, $N$ is the number of anomaly in the batch, and $\text{GIN}(\mathcal{G}_K^i)$ represents the guidance embedding computed by the GIN encoder for the corresponding ego-graph $\mathcal{G}_K^i$.

This weighting and aggregation process enables the dynamic generation of augmented samples that are tailored to the model's learning progress at each training stage. By adaptively focusing on underfit anomaly patterns in later training stages, this strategy integrates a curriculum learning paradigm into the anomaly-guided graph generation framework. Furthermore, it ensures stable training by gradually increasing the difficulty of the generated samples as the model progresses through training.

%% file: data/analysis.tex
\subsection{Theoretical Analysis}
\label{sec:analysis}

We provide theoretical justification for how BAED addresses the critical challenge of class imbalance in graph anomaly detection. Our analysis demonstrates that balanced anomaly augmentation through ego-graph diffusion enables optimal decision boundaries for anomaly detection.

Consider the binary classification where $X \in \mathcal{X}$ represents the input features and $Y \in \{0, 1\}$ denotes the anomaly label. Let $\Theta = \{\theta : \mathcal{X} \to [0, 1]\}$ be the set of all classifiers, where $\theta(x)$ represents the predicted probability of anomaly. Following the framework in~\cite{imbalance}, we assume the data is generated i.i.d. from distribution $\mathbb{P}$, with $\pi = \mathbb{P}(Y = 1)$ denoting the prior probability of anomalies.

For any classifier $\theta$, we define the true positive rate as $\text{TP}(\theta) = \mathbb{P}(Y = 1, \theta = 1)$ and the classification rate as $\gamma(\theta) = \mathbb{P}(\theta = 1)$. The performance metric takes the general form:
{\small
\begin{equation}
\mathcal{L}(\theta) = \frac{c_0 + c_1 \cdot \text{TP}(\theta) + c_2 \cdot \gamma(\theta)}{d_0 + d_1 \cdot \text{TP}(\theta) + d_2 \cdot \gamma(\theta)},
\label{eq:metric}
\end{equation}
}where $c_i, d_i$ for $i \in \{0, 1, 2\}$ are constants determined by $\pi$ and the specific performance metric (e.g., F1-score, precision-recall).

\begin{theorem}[Optimal Decision Boundary under Imbalance]
\label{thm:boundary}
Under mild regularity conditions on $\mathbb{P}$, the optimal decision boundary $\delta^*$ for maximizing $\mathcal{L}(\theta)$ is given by:
{\small
\begin{equation}
\delta^* = \frac{d_2 \mathcal{L}^* - c_2}{c_1 - d_1 \mathcal{L}^*},
\end{equation}
}where $\mathcal{L}^* = \sup_{\theta \in \Theta} \mathcal{L}(\theta)$. The Bayes optimal classifier takes:
{\small
\begin{equation}
\theta^*(x) = \begin{cases}
\mathbb{1}[\eta(x) > \delta^*] & \text{if } c_1 > d_1 \mathcal{L}^* \\
\mathbb{1}[\eta(x) < \delta^*] & \text{if } c_1 < d_1 \mathcal{L}^*
\end{cases}
\end{equation}
}where $\eta(x) = \mathbb{P}(Y = 1 | X = x)$ is the posterior probability of anomaly.
\end{theorem}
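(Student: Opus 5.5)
The plan follows Koyejo et al.\ (the reference \cite{imbalance}): reduce the maximization of the linear-fractional metric $\mathcal{L}$ to maximizing a \emph{linear} functional of the classifier, which is solved pointwise by thresholding $\eta$. Since $\mathcal{L}$ depends on $\theta$ only through the pair $(\text{TP}(\theta),\gamma(\theta))$, I first enlarge $\Theta$ to randomized classifiers $\theta:\mathcal{X}\to[0,1]$. This makes both quantities linear in $\theta$:
\begin{equation*}
\text{TP}(\theta)=\mathbb{E}[\eta(X)\theta(X)],\qquad \gamma(\theta)=\mathbb{E}[\theta(X)].
\end{equation*}
I assume, as part of the regularity conditions, that the denominator $d_0+d_1\text{TP}+d_2\gamma$ is positive on $\Theta$ and that the supremum $\mathcal{L}^*$ is attained.

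\textbf{Step 1 (Dinkelbach reduction).} For every $\theta$ we have $\mathcal{L}(\theta)\le\mathcal{L}^*$. Multiplying by the positive denominator gives
\begin{equation*}
(c_0-d_0\mathcal{L}^*)+(c_1-d_1\mathcal{L}^*)\text{TP}(\theta)+(c_2-d_2\mathcal{L}^*)\gamma(\theta)\le 0,
\end{equation*}
and equality holds exactly when $\theta$ attains $\mathcal{L}^*$. So $\theta^*$ maximizes $\mathcal{L}$ if and only if it maximizes the linear functional
\begin{equation*}
\Phi(\theta)=\mathbb{E}\big[\theta(X)\big((c_1-d_1\mathcal{L}^*)\eta(X)+(c_2-d_2\mathcal{L}^*)\big)\big]
\end{equation*}
and this maximum equals $d_0\mathcal{L}^*-c_0$.

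\textbf{Step 2 (pointwise maximization).} Because $\theta(x)\in[0,1]$, $\Phi$ is maximized by setting $\theta(x)=1$ wherever the integrand coefficient $(c_1-d_1\mathcal{L}^*)\eta(x)+(c_2-d_2\mathcal{L}^*)$ is positive, and $\theta(x)=0$ wherever it is negative. Dividing by $c_1-d_1\mathcal{L}^*$ gives two cases:
\begin{itemize}
\item if $c_1>d_1\mathcal{L}^*$, the condition becomes $\eta(x)>\delta^*$, with $\delta^*=(d_2\mathcal{L}^*-c_2)/(c_1-d_1\mathcal{L}^*)$;
\item if $c_1<d_1\mathcal{L}^*$, dividing by a negative number flips the inequality, giving $\eta(x)<\delta^*$.
\end{itemize}
This is exactly the stated form of $\theta^*$.

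\textbf{Step 3 (closing the argument).} The regularity condition I invoke here is that $\eta(X)$ has no atom at $\delta^*$. Then the tie set $\{\eta=\delta^*\}$ has probability zero, so the deterministic threshold rule already attains the supremum, and Step 1 shows it is optimal for $\mathcal{L}$.

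\textbf{Main obstacle.} The delicate step is justifying the reduction in Step 1. It requires that the supremum is attained and that the denominator keeps a constant sign. I would handle attainment by noting that the achievable set of $(\text{TP},\gamma)$ pairs over randomized classifiers is compact and convex, since it is the image of a weak-$*$ compact convex set under a continuous linear map, and $\mathcal{L}$ is continuous on that set. The degenerate case $c_1=d_1\mathcal{L}^*$ must be excluded, since there $\delta^*$ is undefined; I would fold this exclusion into the regularity assumptions.
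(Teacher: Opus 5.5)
Your proposal is correct. The paper gives no proof of this theorem: it is imported from \cite{imbalance} and used only as motivation for Proposition~\ref{prop:balance}. Your argument therefore supplies what the paper delegates to the citation, and it is the standard route for linear-fractional metrics. The inequality $\mathcal{L}(\theta)\le\mathcal{L}^*$ together with a positive denominator reduces the problem to maximizing $\mathbb{E}[\theta(X)g(X)]$ with $g=(c_1-d_1\mathcal{L}^*)\eta+(c_2-d_2\mathcal{L}^*)$, which is the same linearization at $\mathcal{L}^*$ that underlies the cited result. Your explicit hypotheses (positive denominator, attainment via weak-$*$ compactness, $c_1\neq d_1\mathcal{L}^*$, no atom of $\eta(X)$ at $\delta^*$) are a reasonable reading of the unstated ``mild regularity conditions.''

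One point of emphasis in Step 3 is slightly off. The no-atom condition is not what makes the deterministic rule optimal. The rule $\mathbb{1}[g>0]$ maximizes $\Phi$, attaining $\mathbb{E}[g^+]$, whether or not $\{\eta=\delta^*\}$ is null. So it attains $\mathcal{L}^*$ as soon as the supremum over randomized classifiers is attained and the denominator is positive. What the no-atom condition actually buys is uniqueness. By your Step 1 equivalence, every optimal classifier must equal $1$ a.s.\ on $\{g>0\}$ and $0$ a.s.\ on $\{g<0\}$. Once $\{g=0\}=\{\eta=\delta^*\}$ is null, every Bayes optimal classifier therefore coincides almost surely with the stated form of $\theta^*$.
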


This theorem from~\cite{imbalance} above reveals a critical insight: when $\pi$ is small (severe class imbalance), the optimal threshold $\delta^*$ deviates significantly from the conventional 0.5, leading to suboptimal performance if not properly addressed. 

\begin{proposition}[BAED's Balancing Effect]
\label{prop:balance}
Let $\tilde{\mathbb{P}}$ denote the augmented distribution after applying BAED's ego-graph generation, with $\tilde{\pi} = \tilde{\mathbb{P}}(Y = 1)$ being the augmented anomaly ratio. BAED's curriculum augmentation strategy ensures: 
{\small
\begin{equation}
|\tilde{\pi} - 0.5| < |\pi - 0.5|,
\end{equation}
}leading to an optimal threshold $\tilde{\delta}^*$ closer to 0.5, thereby improving the stability and generalization of the anomaly detector.
\end{proposition}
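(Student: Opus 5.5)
The plan has two parts: a counting argument for the inequality $|\tilde{\pi}-0.5|<|\pi-0.5|$, and then an instantiation of Theorem~\ref{thm:boundary} for the threshold claim.

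\textbf{Counting step.} Model one training iteration of BAED as a mixture. A batch with $n$ samples contains anomalies in proportion $\pi$, and the ego-graph diffusion model (Algorithm~\ref{algo:inference}, guided by $\tilde{\mathbf{h}}^{g}$) adds $m\ge 1$ synthetic anomalous ego-graphs, all labeled $Y=1$. The augmented prior is then
\begin{equation*}
\tilde{\pi}=\frac{\pi n+m}{n+m}.
\end{equation*}
Whatever the curriculum weights $w_i$ in Eq.~(\ref{equ:guide_weight}) are, they only change \emph{which} anomaly mode is generated. They do not change the label of a generated sample. So the curriculum affects $\tilde{\mathbb{P}}(X\mid Y=1)$ but not how $\tilde{\pi}$ depends on $m$. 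The design goal stated in the methodology is a balanced ratio, so I take $m\le(1-2\pi)n$, i.e.\ we never generate more anomalies than needed to reach parity.

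Since anomalies are rare, $\pi<0.5$, and $\tilde{\pi}$ is strictly increasing in $m$. At $m=(1-2\pi)n$ we get exactly $\tilde{\pi}=0.5$. Hence $\pi<\tilde{\pi}\le 0.5$, which gives $|\tilde{\pi}-0.5|=0.5-\tilde{\pi}<0.5-\pi=|\pi-0.5|$. This part is elementary. The one point to state explicitly is that the generator's output is counted as class $1$ in $\tilde{\mathbb{P}}$, which is how it enters the detection loss.

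\textbf{Threshold step.} I would specialize Eq.~(\ref{eq:metric}) to a concrete metric, for instance F1, where $c_0=c_2=0$, $c_1=2$, $d_0=\pi$, $d_1=0$, $d_2=1$. Theorem~\ref{thm:boundary} then gives $\delta^*=\mathcal{L}^*/2$, a classical result. Under $\tilde{\mathbb{P}}$ the coefficients use $\tilde{\pi}$, so I need to show that $\tilde{\mathcal{L}}^*$ moves toward $1$, i.e.\ that $\tilde{\delta}^*$ moves toward $0.5$ as $\tilde{\pi}\to 0.5$.

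The cleaner route is to argue via $\eta$. Let $\mathcal{L}^*$ be the fixed point characterized in~\cite{imbalance}. Raising the prior from $\pi$ to $\tilde{\pi}$ rescales the posterior odds by $\frac{\tilde{\pi}(1-\pi)}{\pi(1-\tilde{\pi})}$, provided the class-conditional $X\mid Y$ is preserved. So the cost-sensitive threshold on $\tilde{\eta}$ corresponds to a threshold $\tilde{\delta}^*$ obtained by this odds rescaling. For balanced accuracy-type metrics this rescaling maps the shrunken threshold exactly to $0.5$ at $\tilde{\pi}=0.5$. By monotonicity of the odds map, for intermediate $\tilde{\pi}$ the threshold is closer to $0.5$ than $\delta^*$ is.

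\textbf{Main obstacle.} The odds argument needs the diffusion samples to match the true anomalous class-conditional, $\tilde{\mathbb{P}}(X\mid Y=1)\approx\mathbb{P}(X\mid Y=1)$. The curriculum reweighting deliberately distorts this toward hard modes. I would handle it by stating the threshold claim under an approximation assumption: the generated distribution lies in the support of the true anomaly distribution, as suggested by the separability proposition and the ELBO training in Eq.~(\ref{equ:loss}). The distortion then only perturbs $\tilde{\mathcal{L}}^*$ continuously, so for small generator error the strict inequality from the counting step carries over to the thresholds.

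A second caveat is that for general $(c_i,d_i)$ monotonicity of $\delta^*$ in $\pi$ need not hold. I would therefore restrict the rigorous claim to F1 and balanced metrics, and present the general case as following from continuity of $\delta^*$ in $\mathcal{L}^*$.
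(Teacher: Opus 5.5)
Your counting step is the paper's proof. The paper's proof consists only of writing $\tilde{\pi}_t=(N_a+M_t)/(N_a+N_n+M_t)$ with global counts and observing $\tilde{\pi}_t>\pi$; it never addresses the threshold claim. Your version is tighter than the paper's. The paper passes from $\tilde{\pi}_t>\pi$ directly to $|\tilde{\pi}_t-0.5|<|\pi-0.5|$ while asserting that $M_t$ ``increases monotonically.'' That inference also needs $\tilde{\pi}_t<1-\pi$, and unbounded growth of $M_t$ would push $\tilde{\pi}_t\to 1$ and break it. Your cap $m\le(1-2\pi)n$ supplies the missing upper bound, and it is exactly what Algorithm~\ref{alg:gad} does by generating the batch deficit. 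You are also right that the $w_i$ only select which mode is generated and do not affect $\tilde{\pi}$; the paper's appeal to the weights as the reason the ratio moves is a non sequitur.

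The threshold step goes beyond the paper and has two problems.

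First, the odds-rescaling argument only works for metrics whose Bayes rule in $X$-space is prior-free, i.e.\ depends only on $(\mathrm{TPR},\mathrm{FPR})$. Balanced accuracy is such a metric. In the form of Eq.~\eqref{eq:metric} it gives $\delta^*=\pi$, so $\tilde{\delta}^*=\tilde{\pi}$, and the claim follows from your counting step alone. No fidelity assumption on the generator is needed there, because this $\delta^*$ does not involve $\mathcal{L}^*$. F1 is not such a metric: its optimal region moves with the prior, so odds-rescaling the old threshold does not give $\tilde{\delta}^*$. For example, with $\pi=0.1$, $\mathcal{L}^*=0.5$ and $\tilde{\pi}=0.5$, rescaling $\delta^*=0.25$ yields $0.75$, whereas $\tilde{\delta}^*=\tilde{\mathcal{L}}^*/2\le 0.5$. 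The fact you said you needed, $\tilde{\mathcal{L}}^*\ge\mathcal{L}^*$, has a direct proof. For a rule with fixed $\mathrm{TPR}=r$ and $\mathrm{FPR}=f$, which are determined by the class-conditionals, $\mathrm{F1}=2r/\bigl(1+r+\tfrac{1-\pi}{\pi}f\bigr)$ is nondecreasing in $\pi$. Hence so is its supremum, giving $\delta^*\le\tilde{\delta}^*\le 0.5$. Here the preservation of $\mathbb{P}(X\mid Y=1)$ that you flagged is genuinely needed.

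Second, drop the claim that the general case follows by continuity. Continuity gives no direction, and the directional claim is false in general. Accuracy has $\delta^*=0.5$ for every $\pi$, so no strict improvement is possible. Worse, $\mathcal{L}=\mathrm{TPR}-\lambda\,\mathrm{FPR}$ (with $d_0=1$, $d_1=d_2=0$) has $\delta^*=\lambda\pi/(1-\pi+\lambda\pi)$. For $\lambda=9$ this moves from $0.5$ at $\pi=0.1$ to about $0.79$ at $\pi=0.3$, away from $0.5$. So the second half of the proposition holds only metric by metric, and you should state it that way.
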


\begin{proof}
Let $N_a$ and $N_n$ denote the number of anomalous and normal samples in the original dataset, respectively, with $\pi = N_a/(N_a + N_n)$. At training iteration $t$, BAED generates $M_t$ synthetic anomalous ego-graphs based on the importance weights $w_i$ defined in Eq.~\eqref{eq:embedding calculate}. The effective anomaly ratio becomes:
{\small
\begin{equation}
\tilde{\pi}_t = \frac{N_a + M_t}{N_a + N_n + M_t}.
\end{equation}
}Since $w_i = \frac{1}{1 + e^{-\alpha(l_i - \beta)}} \cdot \frac{t}{T}$ increases with the loss $l_i$ and training progress $t/T$, underrepresented anomaly types receive higher generation priority. The curriculum mechanism ensures $M_t$ increases monotonically, driving $\tilde{\pi}_t$ towards 0.5. As $\pi < 0.5$ in imbalanced settings, we have $\tilde{\pi}_t > \pi$, thus $|\tilde{\pi}_t - 0.5| < |\pi - 0.5|$.
\end{proof}

By dynamically generating anomalous ego-graphs that target underrepresented patterns, BAED effectively transforms the learning problem from one with extreme imbalance ($\pi \ll 0.5$) to a more balanced scenario ($\tilde{\pi} \approx 0.5$), thereby achieving superior detection performance as validated in our experiments.

%% file: data/experiment.tex
\begin{table*}[t]
\footnotesize
\centering
\caption{Overall performance comparison on five datasets within inductive setting, with the bold numbers representing the best results and the underlined numbers indicating the most competitive results (RQ1)}
\label{tab:main_result}
\resizebox{\linewidth}{!}{
\begin{tabular}{l|ccccccccccccccc}
\toprule
\multirow{2}{*}{\textbf{Method}} & \multicolumn{3}{c}{\textbf{Elliptic}} & \multicolumn{3}{c}{\textbf{Reddit}} & \multicolumn{3}{c}{\textbf{Photo}} & \multicolumn{3}{c}{\textbf{T-Finance}} & \multicolumn{3}{c}{\textbf{Dgraph}} \\
\cmidrule(r){2-4} \cmidrule(r){5-7} \cmidrule(r){8-10} \cmidrule(r){11-13} \cmidrule(r){14-16}
 & \textbf{AUROC} & \textbf{AUPRC} & \textbf{F1} & \textbf{AUROC} & \textbf{AUPRC} & \textbf{F1} & \textbf{AUROC} & \textbf{AUPRC} & \textbf{F1} & \textbf{AUROC} & \textbf{AUPRC} & \textbf{F1} & \textbf{AUROC} & \textbf{AUPRC} & \textbf{F1} \\
\midrule
GCN          & 0.517$_{\pm 0.004}$ & 0.065$_{\pm 0.003}$ & 0.491$_{\pm 0.009}$ & 0.508$_{\pm 0.007}$ & 0.040$_{\pm 0.005}$ & 0.492$_{\pm 0.005}$ & 0.499$_{\pm 0.001}$ & 0.087$_{\pm 0.004}$ & 0.512$_{\pm 0.017}$ & 0.357$_{\pm 0.011}$ & 0.156$_{\pm 0.016}$ & 0.488$_{\pm 0.007}$ & 0.529$_{\pm 0.014}$ & \underline{0.013$_{\pm 0.005}$} & 0.497$_{\pm 0.009}$ \\
\quad + AEGIS    & 0.541$_{\pm 0.014}$ & 0.098$_{\pm 0.004}$ & 0.516$_{\pm 0.007}$ & 0.531$_{\pm 0.007}$ & 0.039$_{\pm 0.003}$ & 0.498$_{\pm 0.015}$ & \underline{0.585$_{\pm 0.008}$} & \underline{0.124$_{\pm 0.005}$} & \underline{0.523$_{\pm 0.012}$} & 0.579$_{\pm 0.005}$ & 0.048$_{\pm 0.003}$ & 0.537$_{\pm 0.018}$ & 0.550$_{\pm 0.009}$ & 0.005$_{\pm 0.001}$ & 0.495$_{\pm 0.012}$ \\
\quad + GGAD     & \underline{0.603$_{\pm 0.005}$} & \underline{0.112$_{\pm 0.004}$} & \underline{0.532$_{\pm 0.017}$} & 0.525$_{\pm 0.015}$ & 0.041$_{\pm 0.002}$ & 0.505$_{\pm 0.007}$ & 0.560$_{\pm 0.015}$ & 0.122$_{\pm 0.005}$ & 0.497$_{\pm 0.019}$ & 0.823$_{\pm 0.008}$ & 0.183$_{\pm 0.007}$ & 0.587$_{\pm 0.016}$ & \underline{0.558$_{\pm 0.007}$} & 0.006$_{\pm 0.001}$ & \textbf{0.531}$_{\pm \textbf{0.014}}$ \\
\quad + CGenGA & 0.595$_{\pm 0.011}$ & 0.075$_{\pm 0.004}$ & 0.523$_{\pm 0.006}$ & \underline{0.681$_{\pm 0.016}$} & \underline{0.047$_{\pm 0.003}$} & \underline{0.532$_{\pm 0.017}$} & 0.503$_{\pm 0.014}$ & 0.094$_{\pm 0.004}$ & 0.496$_{\pm 0.005}$ & \underline{0.847$_{\pm 0.019}$} & \underline{0.776$_{\pm 0.019}$} & \underline{0.618$_{\pm 0.019}$} & 0.549$_{\pm 0.015}$ & \underline{0.013$_{\pm 0.002}$} & 0.513$_{\pm 0.015}$ \\
\quad + BAED     & \textbf{0.671$_{\pm \textbf{0.007}}$} & \textbf{0.154}$_{\pm \textbf{0.006}}$ & \textbf{0.566}$_{\pm \textbf{0.007}}$ & \textbf{0.732}$_{\pm \textbf{0.004}}$ & \textbf{0.077}$_{\pm \textbf{0.005}}$ & \textbf{0.543}$_{\pm \textbf{0.011}}$ & \textbf{0.703}$_{\pm \textbf{0.009}}$ & \textbf{0.306}$_{\pm \textbf{0.007}}$ & \textbf{0.581}$_{\pm \textbf{0.015}}$ & \textbf{0.908}$_{\pm \textbf{0.008}}$ & \textbf{0.856}$_{\pm \textbf{0.003}}$ & \textbf{0.713}$_{\pm \textbf{0.009}}$ & \textbf{0.566}$_{\pm \textbf{0.006}}$ & \textbf{0.014}$_{\pm \textbf{0.002}}$ & \underline{0.521$_{\pm 0.007}$} \\
\midrule
BWGNN & \underline{0.637$_{\pm 0.006}$} & 0.064$_{\pm 0.003}$ & 0.505$_{\pm 0.007}$ & 0.520$_{\pm 0.017}$ & 0.036$_{\pm 0.007}$ & 0.517$_{\pm 0.007}$ & 0.613$_{\pm 0.004}$ & 0.165$_{\pm 0.008}$ & 0.558$_{\pm 0.019}$ & 0.208$_{\pm 0.008}$ & 0.133$_{\pm 0.006}$ & 0.301$_{\pm 0.015}$ & 0.580$_{\pm 0.006}$ & \underline{0.015$_{\pm 0.002}$} & \underline{0.519$_{\pm 0.009}$} \\
\quad + AEGIS    & 0.564$_{\pm 0.014}$ & \underline{0.100$_{\pm 0.005}$} & \underline{0.512$_{\pm 0.017}$} & 0.545$_{\pm 0.007}$ & 0.037$_{\pm 0.005}$ & 0.563$_{\pm 0.012}$ & 0.649$_{\pm 0.006}$ & 0.149$_{\pm 0.006}$ & 0.561$_{\pm 0.003}$ & 0.528$_{\pm 0.014}$ & 0.042$_{\pm 0.003}$ & 0.551$_{\pm 0.003}$ & 0.445$_{\pm 0.014}$ & 0.006$_{\pm 0.001}$ & 0.488$_{\pm 0.014}$ \\
\quad + GGAD     & 0.554$_{\pm 0.012}$ & 0.099$_{\pm 0.004}$ & 0.507$_{\pm 0.005}$ & 0.553$_{\pm 0.014}$ & 0.040$_{\pm 0.003}$ & 0.558$_{\pm 0.011}$ & 0.646$_{\pm 0.007}$ & 0.208$_{\pm 0.007}$ & 0.567$_{\pm 0.011}$ & \underline{0.770$_{\pm 0.018}$} & 0.098$_{\pm 0.005}$ & 0.572$_{\pm 0.018}$ & 0.594$_{\pm 0.016}$ & 0.008$_{\pm 0.001}$ & 0.511$_{\pm 0.015}$ \\
\quad + CGenGA & 0.585$_{\pm 0.019}$ & 0.073$_{\pm 0.004}$ & 0.502$_{\pm 0.002}$ & \underline{0.570$_{\pm 0.014}$} & \underline{0.045$_{\pm 0.001}$}& \underline{0.587$_{\pm 0.024}$} & \underline{0.751$_{\pm 0.008}$}& \underline{0.497$_{\pm 0.013}$} & \underline{0.613$_{\pm 0.004}$} & 0.627$_{\pm 0.016}$ & \underline{0.212$_{\pm 0.008}$} & \underline{0.581$_{\pm 0.019}$} & \textbf{0.635}$_{\pm \textbf{0.017}}$ & 0.012$_{\pm 0.002}$ & 0.601$_{\pm 0.018}$ \\
\quad + BAED     & \textbf{0.669}$_{\pm \textbf{0.016}}$ & \textbf{0.102}$_{\pm \textbf{0.005}}$ & \textbf{0.563}$_{\pm \textbf{0.009}}$ & \textbf{0.654}$_{\pm \textbf{0.016}}$ & \textbf{0.057}$_{\pm \textbf{0.004}}$ & \textbf{0.613}$_{\pm \textbf{0.013}}$ & \textbf{0.980}$_{\pm \textbf{0.003}}$ & \textbf{0.911}$_{\pm \textbf{0.011}}$ & \textbf{0.853}$_{\pm \textbf{0.012}}$ & \textbf{0.909}$_{\pm \textbf{0.019}}$ & \textbf{0.848}$_{\pm \textbf{0.009}}$ & \textbf{0.702}$_{\pm \textbf{0.021}}$ & \underline{0.599$_{\pm 0.016}$} & \textbf{0.016}$_{\pm \textbf{0.002}}$ & \textbf{0.522}$_{\pm \textbf{0.018}}$ \\
\midrule
GraphSAGE    & 0.505$_{\pm 0.012}$ & 0.054$_{\pm 0.003}$ & 0.507$_{\pm 0.015}$ & 0.546$_{\pm 0.014}$ & 0.037$_{\pm 0.003}$ & 0.508$_{\pm 0.002}$ & 0.501$_{\pm 0.013}$ & 0.085$_{\pm 0.004}$ & 0.503$_{\pm 0.026}$ & 0.554$_{\pm 0.007}$ & 0.238$_{\pm 0.008}$ & 0.532$_{\pm 0.017}$ & 0.464$_{\pm 0.014}$ & \underline{0.012$_{\pm 0.001}$} & 0.497$_{\pm 0.008}$ \\
\quad + AEGIS    & 0.576$_{\pm 0.024}$ & 0.103$_{\pm 0.005}$ & 0.521$_{\pm 0.017}$ & 0.553$_{\pm 0.014}$ & 0.048$_{\pm 0.003}$ & \underline{0.525$_{\pm 0.017}$} & 0.628$_{\pm 0.016}$ & 0.128$_{\pm 0.006}$ & 0.562$_{\pm 0.030}$ & 0.533$_{\pm 0.003}$ & 0.037$_{\pm 0.003}$ & 0.521$_{\pm 0.011}$ & \underline{0.540$_{\pm 0.012}$} & 0.005$_{\pm 0.001}$ & \underline{0.509$_{\pm 0.017}$} \\
\quad + GGAD     & \underline{0.579$_{\pm 0.015}$} & \underline{0.105$_{\pm 0.005}$} & \underline{0.534$_{\pm 0.011}$} & 0.533$_{\pm 0.013}$ & 0.043$_{\pm 0.003}$ & 0.509$_{\pm 0.013}$ & \underline{0.669$_{\pm 0.012}$} & \underline{0.175$_{\pm 0.007}$} & \underline{0.572$_{\pm 0.011}$} & \underline{0.689$_{\pm 0.017}$} & 0.074$_{\pm 0.004}$ & 0.562$_{\pm 0.005}$ & 0.537$_{\pm 0.015}$ & 0.005$_{\pm 0.001}$ & 0.502$_{\pm 0.016}$ \\
\quad + CGenGA & 0.531$_{\pm 0.033}$ & 0.061$_{\pm 0.003}$ & 0.529$_{\pm 0.015}$ & \underline{0.624$_{\pm 0.015}$} & 0.053$_{\pm 0.004}$ & 0.522$_{\pm 0.019}$ & 0.544$_{\pm 0.014}$ & 0.088$_{\pm 0.004}$ & 0.513$_{\pm 0.011}$ & 0.596$_{\pm 0.015}$ & \underline{0.324$_{\pm 0.009}$} & \underline{0.563$_{\pm 0.009}$} & 0.529$_{\pm 0.015}$ & 0.009$_{\pm 0.001}$ & 0.498$_{\pm 0.013}$ \\
\quad + BAED     & \textbf{0.593}$_{\pm \textbf{0.015}}$ & \textbf{0.134}$_{\pm \textbf{0.006}}$ & \textbf{0.583}$_{\pm \textbf{0.013}}$ & \textbf{0.667}$_{\pm \textbf{0.016}}$ & \textbf{0.082}$_{\pm \textbf{0.005}}$ & \textbf{0.562}$_{\pm \textbf{0.002}}$ & \textbf{0.971}$_{\pm \textbf{0.020}}$ & \textbf{0.866}$_{\pm \textbf{0.003}}$ & \textbf{0.763}$_{\pm \textbf{0.015}}$ & \textbf{0.750}$_{\pm \textbf{0.018}}$ & \textbf{0.425}$_{\pm \textbf{0.012}}$ & \textbf{0.612}$_{\pm \textbf{0.017}}$ & \textbf{0.668}$_{\pm \textbf{0.019}}$ & \textbf{0.020}$_{\pm \textbf{0.002}}$ & \textbf{0.527}$_{\pm \textbf{0.019}}$ \\
\midrule
BernNet      & 0.495$_{\pm 0.012}$ & 0.053$_{\pm 0.007}$ & 0.402$_{\pm 0.014}$ & 0.469$_{\pm 0.013}$ & \underline{0.068$_{\pm 0.004}$} & 0.482$_{\pm 0.018}$ & 0.587$_{\pm 0.015}$ & 0.119$_{\pm 0.005}$ & 0.509$_{\pm 0.028}$ & 0.224$_{\pm 0.009}$ & 0.135$_{\pm 0.006}$ & 0.292$_{\pm 0.014}$ & 0.437$_{\pm 0.014}$ & 0.003$_{\pm 0.001}$ & 0.431$_{\pm 0.015}$ \\
\quad + AEGIS    & \underline{0.666$_{\pm 0.016}$} & \underline{0.139$_{\pm 0.005}$} & \underline{0.522$_{\pm 0.011}$} & 0.525$_{\pm 0.013}$ & 0.037$_{\pm 0.007}$ & 0.511$_{\pm 0.022}$ & 0.618$_{\pm 0.017}$ & 0.123$_{\pm 0.005}$ & 0.552$_{\pm 0.013}$ & 0.543$_{\pm 0.015}$ & 0.044$_{\pm 0.003}$ & 0.492$_{\pm 0.017}$ & 0.469$_{\pm 0.014}$ & 0.005$_{\pm 0.001}$ & 0.472$_{\pm 0.016}$ \\
\quad + GGAD     & 0.653$_{\pm 0.029}$ & 0.134$_{\pm 0.006}$ & 0.513$_{\pm 0.014}$ & 0.515$_{\pm 0.013}$ & 0.036$_{\pm 0.003}$ & 0.529$_{\pm 0.011}$ & 0.617$_{\pm 0.017}$ & 0.137$_{\pm 0.006}$ & 0.501$_{\pm 0.015}$ & 0.653$_{\pm 0.016}$ & 0.067$_{\pm 0.004}$ & 0.542$_{\pm 0.019}$ & 0.487$_{\pm 0.014}$ & 0.006$_{\pm 0.001}$ & 0.492$_{\pm 0.016}$ \\
\quad + CGenGA & 0.575$_{\pm 0.019}$ & 0.071$_{\pm 0.004}$ & 0.462$_{\pm 0.016}$ & \underline{0.620$_{\pm 0.015}$} & 0.053$_{\pm 0.004}$ & \underline{0.581$_{\pm 0.016}$} & \underline{0.916$_{\pm 0.019}$} & \underline{0.712$_{\pm 0.018}$} & \underline{0.682$_{\pm 0.015}$} & \underline{0.825$_{\pm 0.019}$} & \underline{0.421$_{\pm 0.011}$} & \underline{0.632$_{\pm 0.018}$} & \underline{0.501$_{\pm 0.015}$} & \underline{0.012$_{\pm 0.002}$} & \underline{0.501$_{\pm 0.006}$} \\
\quad + BAED     & \textbf{0.702}$_{\pm \textbf{0.017}}$ & \textbf{0.141}$_{\pm \textbf{0.006}}$ & \textbf{0.570}$_{\pm \textbf{0.006}}$ & \textbf{0.715}$_{\pm \textbf{0.017}}$ & \textbf{0.075}$_{\pm \textbf{0.005}}$ & \textbf{0.652}$_{\pm \textbf{0.015}}$ & \textbf{0.982}$_{\pm \textbf{0.020}}$ & \textbf{0.886}$_{\pm \textbf{0.021}}$ & \textbf{0.713}$_{\pm \textbf{0.017}}$ & \textbf{0.901}$_{\pm \textbf{0.010}}$ & \textbf{0.814}$_{\pm \textbf{0.017}}$ & \textbf{0.722}$_{\pm \textbf{0.009}}$ & \textbf{0.666}$_{\pm \textbf{0.017}}$ & \textbf{0.016}$_{\pm \textbf{0.002}}$ & \textbf{0.539}$_{\pm \textbf{0.019}}$ \\
\bottomrule
\end{tabular}
}

\end{table*}

\begin{table*}[ht]
\centering
\caption{Overall comparison across two datasets in the inductive setting under dynamic graphs, with bold numbers indicating the best performance and underlined numbers highlighting the most competitive results(RQ2).}
\label{tab:main_dynamic_result}
\resizebox{\linewidth}{!}{
\begin{tabular}{l|l|cccccccc}
\toprule
\multirow{2}{*}{\textbf{Dataset}} & \multirow{2}{*}{\textbf{Method}}  & \multicolumn{2}{c}{\textbf{T=1}} & \multicolumn{2}{c}{\textbf{T=2}} & \multicolumn{2}{c}{\textbf{T=3}} & \multicolumn{2}{c}{\textbf{T=4}} \\ 
\cmidrule(r){3-4} \cmidrule(r){5-6} \cmidrule(r){7-8} \cmidrule(r){9-10} 
 & & \textbf{AUROC} & \textbf{AUPRC} & \textbf{AUROC} & \textbf{AUPRC} & \textbf{AUROC} & \textbf{AUPRC} & \textbf{AUROC} & \textbf{AUPRC} \\
\midrule
\multirow{5}{*}{Dgraph} 
& GCN & 0.4327$_{\pm 0.0105}$ & 0.0092$_{\pm 0.0113}$ & 0.4882$_{\pm 0.0127}$ & 0.0104$_{\pm 0.0142}$ & 0.5091$_{\pm 0.0139}$ & 0.0115$_{\pm 0.0128}$ & 0.5199$_{\pm 0.0116}$ & 0.0118$_{\pm 0.0097}$ \\
& \quad +AEGIS & \underline{0.4684$_{\pm 0.0132}$} & 0.0028$_{\pm 0.0087}$ & \underline{0.5046$_{\pm 0.0115}$} & 0.0045$_{\pm 0.0091}$ & \underline{0.5352$_{\pm 0.0128}$} & 0.0042$_{\pm 0.0083}$ & \underline{0.5492$_{\pm 0.0143}$} & 0.0055$_{\pm 0.0096}$ \\
& \quad +GGAD & 0.4471$_{\pm 0.0109}$ & 0.0019$_{\pm 0.0084}$ & 0.4986$_{\pm 0.0127}$ & 0.0037$_{\pm 0.0089}$ & 0.5169$_{\pm 0.0136}$ & 0.0043$_{\pm 0.0094}$ & 0.5441$_{\pm 0.0118}$ & 0.0057$_{\pm 0.0086}$ \\
& \quad +CGenGA & 0.4524$_{\pm 0.0145}$ & \underline{0.0095$_{\pm 0.0102}$} & 0.5001$_{\pm 0.0139}$ & \underline{0.0109$_{\pm 0.0116}$} & 0.5136$_{\pm 0.0122}$ & \underline{0.0118$_{\pm 0.0108}$} & 0.5377$_{\pm 0.0147}$ & \underline{0.0124$_{\pm 0.0121}$} \\
& \quad +BAED & \textbf{0.4817$_{\pm 0.0082}$} & \textbf{0.0102$_{\pm 0.0051}$} & \textbf{0.5249$_{\pm 0.0069}$} & \textbf{0.0115$_{\pm 0.0047}$} & \textbf{0.5417$_{\pm 0.0073}$} & \textbf{0.0128$_{\pm 0.0039}$} & \textbf{0.5623$_{\pm 0.0091}$} & \textbf{0.0136$_{\pm 0.0054}$} \\
\midrule
\multirow{5}{*}{Elliptic} 
& GCN & 0.4927$_{\pm 0.0091}$ & 0.0317$_{\pm 0.0135}$ & 0.5123$_{\pm 0.0123}$ & 0.0525$_{\pm 0.0148}$ & 0.5087$_{\pm 0.0118}$ & 0.0478$_{\pm 0.0132}$ & 0.5139$_{\pm 0.0106}$ & 0.0496$_{\pm 0.0129}$ \\
& \quad +AEGIS & 0.4977$_{\pm 0.0093}$ & 0.0542$_{\pm 0.0128}$ & 0.5275$_{\pm 0.0104}$ & 0.0701$_{\pm 0.0135}$ & 0.5128$_{\pm 0.0117}$ & 0.0694$_{\pm 0.0142}$ & 0.5318$_{\pm 0.0098}$ & 0.0867$_{\pm 0.0126}$ \\
& \quad +GGAD & \underline{0.5083$_{\pm 0.0125}$} & \underline{0.0643$_{\pm 0.0149}$} & 0.0542$_{\pm 0.0136}$ & \underline{0.0721$_{\pm 0.0153}$} & \underline{0.5727$_{\pm 0.0141}$} & \underline{0.0804$_{\pm 0.0138}$} & \underline{0.5974$_{\pm 0.0150}$} & \underline{0.1084$_{\pm 0.0147}$} \\
& \quad +CGenGA & 0.5017$_{\pm 0.0108}$ & 0.0438$_{\pm 0.0119}$ & \underline{0.5429$_{\pm 0.0143}$} & 0.0508$_{\pm 0.0132}$ & 0.5875$_{\pm 0.0135}$ & 0.0621$_{\pm 0.0124}$ & 0.5911$_{\pm 0.0116}$ & 0.0742$_{\pm 0.0139}$ \\
& \quad +BAED & \textbf{0.5818$_{\pm 0.0043}$} & \textbf{0.1271$_{\pm 0.0026}$} & \textbf{0.6036$_{\pm 0.0038}$} & \textbf{0.1442$_{\pm 0.0031}$} & \textbf{0.5924$_{\pm 0.0029}$} & \textbf{0.1303$_{\pm 0.0025}$} & \textbf{0.6083$_{\pm 0.0035}$} & \textbf{0.1469$_{\pm 0.0028}$} \\
\bottomrule
\end{tabular}
}
\end{table*}

\section{Experiments}
\label{sec:experiment}
In this section, we answer the following five questions. 
\begin{itemize}[leftmargin=*]
    \item \textbf{RQ1}: What is BAED's overall performance compared to state-of-the-art methods?
    \item \textbf{RQ2}: How does BAED perform on dynamic graph datasets compared to state-of-the-art methods?
    \item \textbf{RQ3}: How does Anomaly-Guidance Embedding improve the performance of graph anomaly detection?
    \item \textbf{RQ4}: How does the weighting factor in Curriculum Anomaly Augmentation affect the performance of denoising models?
    \item \textbf{RQ5}: What advantages does BAED exhibit in generative data augmentation?
    \item \textbf{RQ6}: Is the distribution of the generated samples similar to the distribution of the original samples?
    
\end{itemize}

\subsection{Experimental Setups}
\subsubsection{Datasets}
In the experiments, we evaluate the performance of our approach on five large-scale, real-world graph datasets from diverse domains. Specifically, the datasets used include the transaction network from T-Finance~\cite{tang2022rethinking}, the social interaction network from Reddit~\cite{kumar2019predicting}, the Bitcoin transaction network from Elliptic~\cite{weber2019anti}, the co-purchase network from Photo~\cite{mcauley2015image}, and the financial network from DGraph~\cite{huang2022dgraph}. Detailed statistics are provided in Appendix~\ref{Datasets}.

\subsubsection{Competing Methods}
To validate the effectiveness of the BAED model, we compared it with three state-of-the-art generative anomaly detection models: AEGIS~\cite{ding2021inductive}, GGAD~\cite{qiao2024generative}, and CGenGA~\cite{ma2024graph}. 
These models represent the latest advancements in GAD from the perspective of anomalous sample generation. Detailed information about these models can be found in Appendix~\ref{Baselines}.

\subsubsection{Evaluation Metric}
Based on~\cite{qiao2024generative, pang2021toward}, three commonly used complementary evaluation metrics are employed in this study: the Area Under the ROC Curve (AUROC), the Area Under the Precision-Recall Curve (AUPRC), and the F1-score. Higher AUROC, AUPRC, and F1-score values indicate better model performance. 

\subsubsection{Implementation Details}
Experiments are conducted on four baseline GNN models (GCN~\cite{kipf2016semi}, BWGNN~\cite{tang2022rethinking}, GraphSAGE~\cite{hamilton2017inductive}, and BernNet~\cite{he2021bernnet}) and four negative sample generators (GGAD~\cite{qiao2024generative}, AEGIS~\cite{ding2021inductive}, CGenGA~\cite{ma2024graph}, and BAED). 
The datasets are split into training (70\%), validation (15\%), and test (15\%) sets.
Unlike traditional transductive graph learning, our approach adopts an inductive learning framework, where the model does not have access to the test set during training. The graph is transformed into ego-graph for each node, with the ego-graph of test nodes excluded from the training set. This design promotes better generalization to unseen data, which is a key advantage of inductive learning.
The Adam optimizer is used, and the learning rate is tuned within the range of $[10^{-3}, 10^{-4}, 10^{-5}, 10^{-6}]$. 

\subsection{Improvement on Anomaly Detection~(RQ1)}

Table~\ref{tab:main_result} demonstrates BAED's superior performance over existing data augmentation methods. BAED-generated synthetic data enhances traditional GNN models by up to twice their original performance, showcasing strong generalization capabilities.

BAED exhibits exceptional robustness on challenging datasets. On the sparse Elliptic dataset, while AEGIS and GGAD degraded BWGNN's AUROC by 12.99\% and 15.01\% respectively, BAED achieved a 4.97\% improvement. More dramatically, on the large T-Finance dataset, BAED boosted BWGNN's AUROC and AUPRC by 90.93\% and 84.84\%, substantially outperforming competing methods. These results indicate that BAED generates data better aligned with true anomaly distributions, enhancing detector robustness in noisy and sparse environments.

\subsection{Performance on Dynamic Graph Datasets (RQ2)}

Table~\ref{tab:main_dynamic_result} presents the experimental results of BAED and baseline methods under an incremental learning paradigm on dynamic graph datasets, which simulates real-world streaming data scenarios by sequentially training and testing models on consecutive temporal segments to mimic non-stationary graph environments. BAED demonstrates superior and stable performance across all time steps on both datasets, exhibiting significantly lower variance in its results compared to other methods, which highlights its robust adaptability to evolving graph structures.

\subsection{The Impact of Anomaly-Guidance~(RQ3)}
To evaluate the Anomaly-Guidance Embedding in \textbf{BAED}, we compared it with two variants: \textbf{BUED}, a balanced unconditional ego-graph diffusion model, and the \textbf{original} model, which lacks data augmentation. Experiments on the Reddit and Photo datasets (Figure~\ref{fig:ep2}) demonstrate that both BAED and BUED, with curriculum anomaly augmentation, outperform the original model trained on imbalanced data. This highlights the crucial role of specific anomaly self-graphs in enhancing detection performance. Furthermore, BAED outperforms BUED, validating the effectiveness of our dynamic anomaly-guided graph generation technique, which customizes augmented samples based on specific anomaly patterns and improves model performance.

\begin{figure}[t]
    \centering
    \begin{subfigure}{0.49\linewidth}
        \centering
        \includegraphics[width=\linewidth]{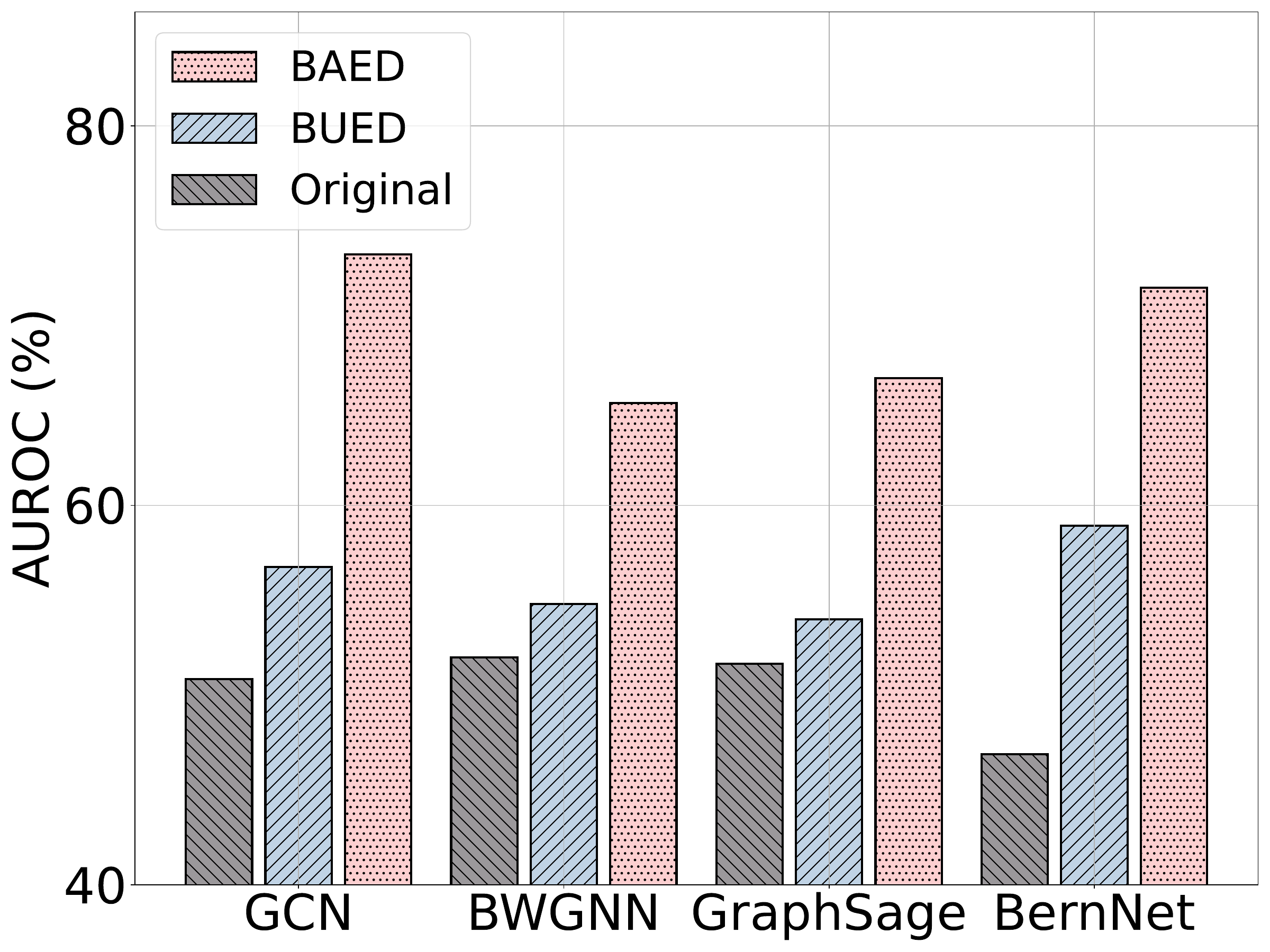} 
        \caption{\footnotesize Comparison of AUROC for Reddit}
        \label{fig3-a}
    \end{subfigure}
    \hfill 
    \begin{subfigure}{0.49\linewidth}
        \centering
        \includegraphics[width=\linewidth]{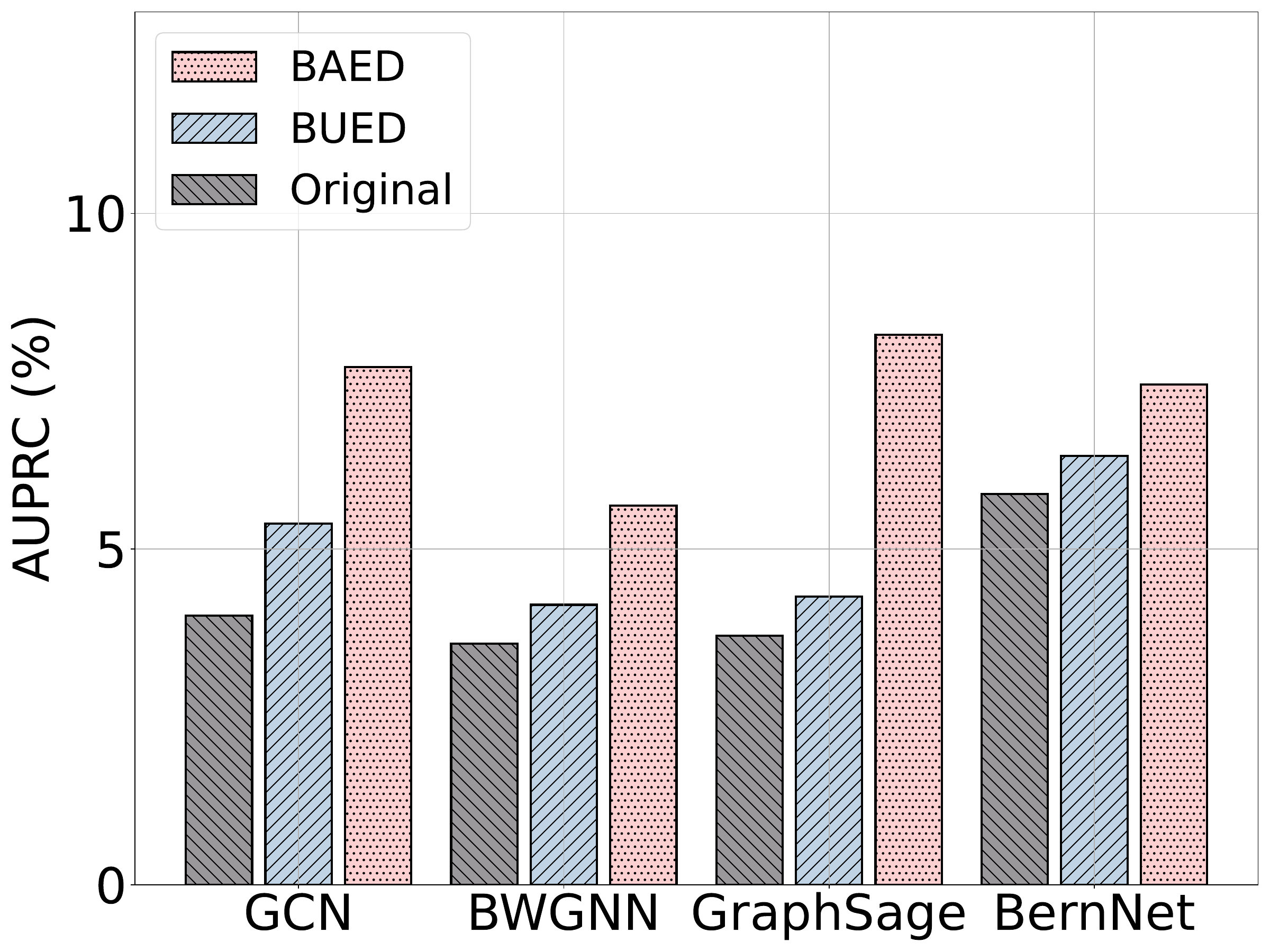} 
        \caption{\footnotesize Comparison of AUPRC for Reddit}
        \label{fig3-b}
    \end{subfigure}
    \vfill 
    \begin{subfigure}{0.49\linewidth}
        \centering
        \includegraphics[width=\linewidth]{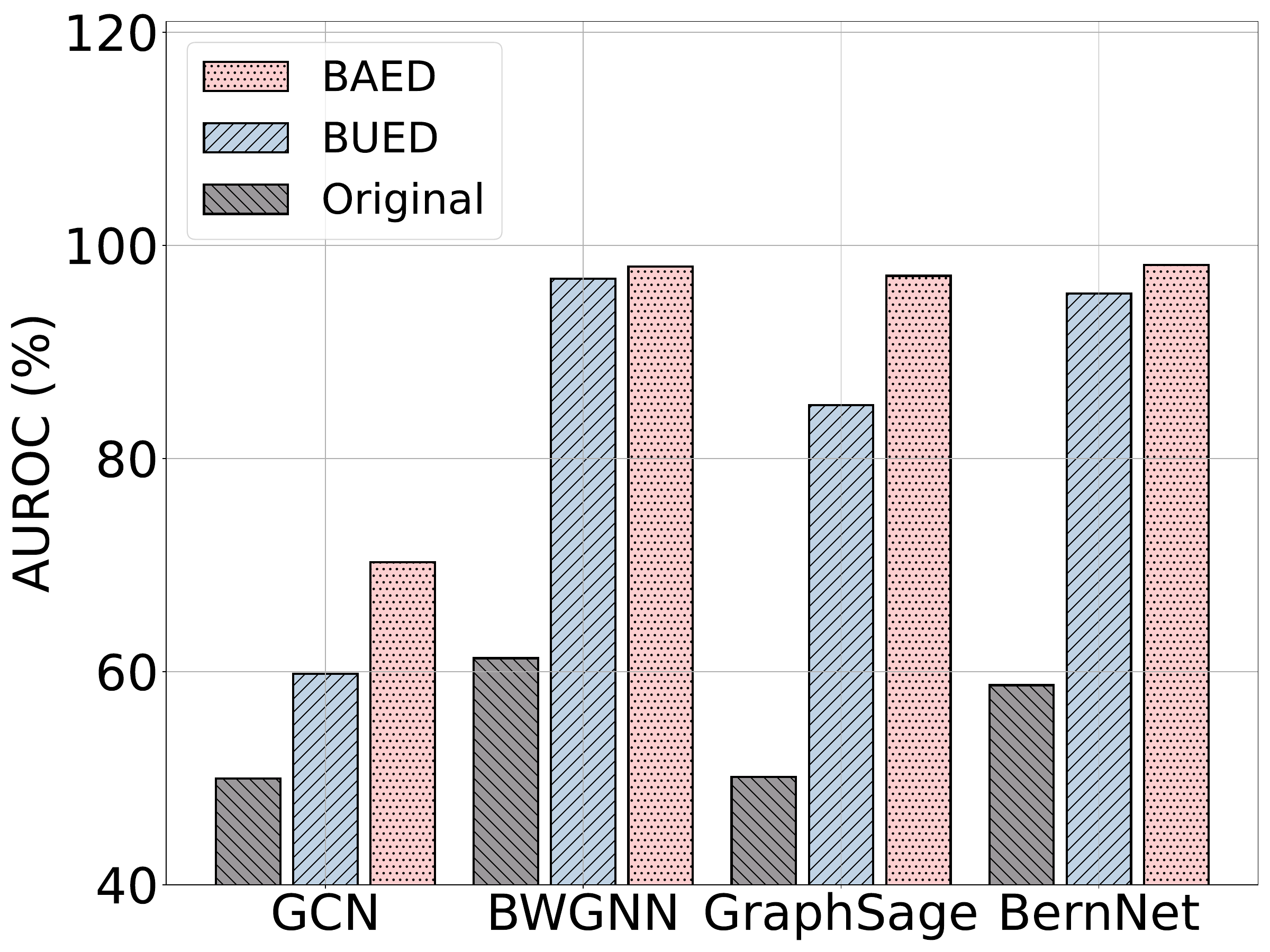} 
        \caption{\footnotesize Comparison of AUROC for Photo}
        \label{fig3-c}
    \end{subfigure}
    \hfill 
    \begin{subfigure}{0.49\linewidth}
        \centering
        \includegraphics[width=\linewidth]{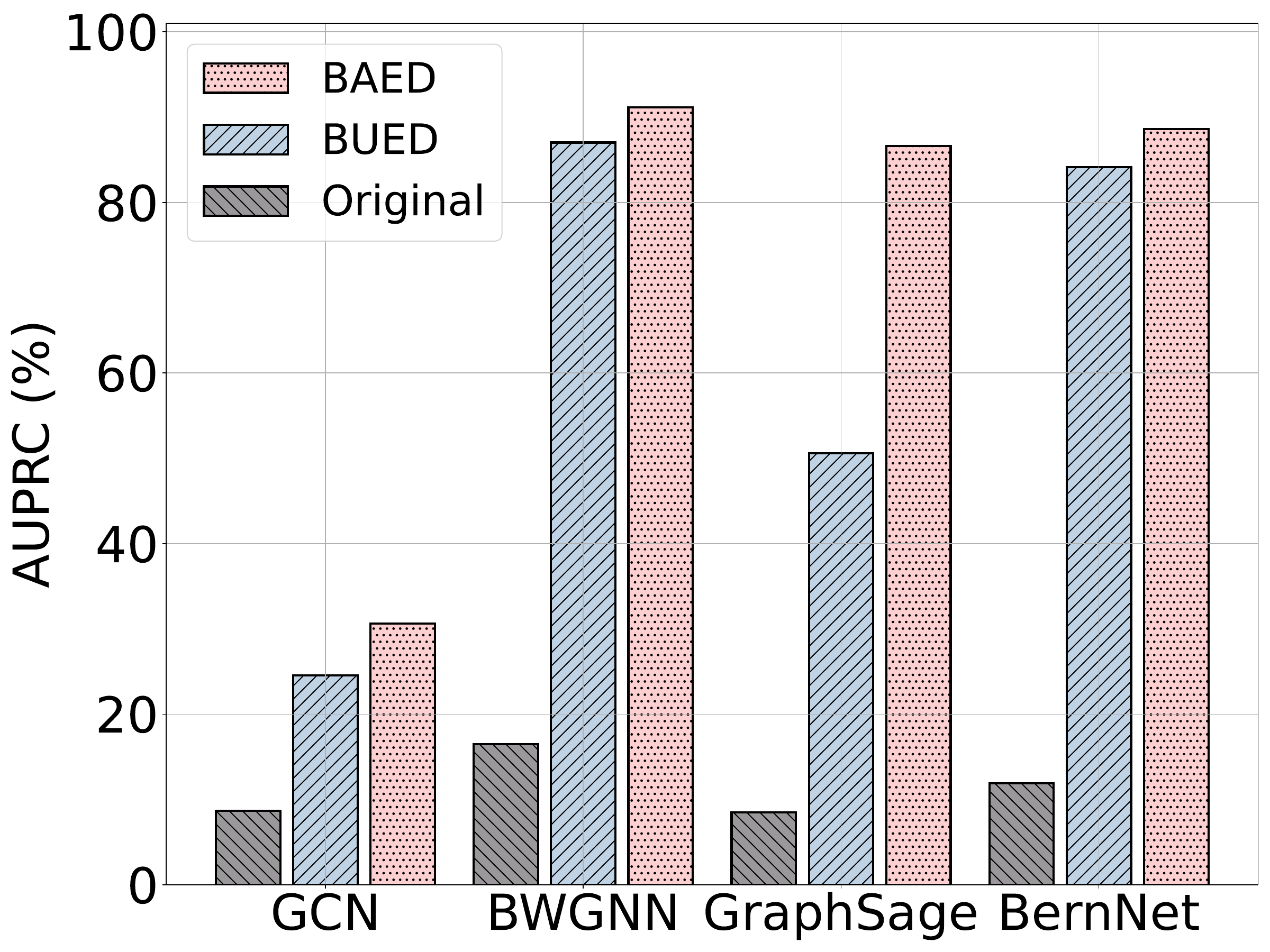} 
        \caption{\footnotesize Comparison of AUPRC for Photo}
        \label{fig3-d}
    \end{subfigure}
    \caption{Impact of Anomaly-Guidance Embedding.}
    \label{fig:ep2}
\end{figure}

\subsection{Ablation Study on Curriculum Anomaly Augmentation~(RQ4)}
\begin{table}[ht]
\centering
\setlength{\tabcolsep}{5pt} 
\renewcommand{\arraystretch}{1.2} 
\caption{Performance on Different Guidance Embedding}

\label{tab:ablation}
\resizebox{0.95\linewidth}{!}{
\begin{tabular}{@{}c|c|cccc@{}}
\toprule
\multirow{2}{*}{\textbf{Model}} & \multirow{2}{*}{\textbf{Weighting Strategy}} & \multicolumn{2}{c}{\textbf{Reddit}} & \multicolumn{2}{c}{\textbf{Elliptic}} \\ 
\cmidrule(lr){3-4} \cmidrule(lr){5-6}
 &  & \textbf{AUROC} & \textbf{AUPRC} & \textbf{AUROC} & \textbf{AUPRC} \\ 
\midrule
\multirow{3}{*}{GCN}      & random   & 0.6075 & 0.0405 & 0.5745 & 0.1329 \\
          & average  & 0.6225 & 0.0696 & 0.6016 & 0.1437 \\
          & \textbf{curriculum-based} & \textbf{0.7322} & \textbf{0.0771} & \textbf{0.6706} & \textbf{0.1537} \\
          \midrule
\multirow{3}{*}{BWGNN}       & random   & 0.5316 & 0.0425 & 0.5713 & 0.0852 \\
          & average  & 0.5642 & 0.0401 & 0.5878 & 0.0761 \\
          & \textbf{curriculum-based} & \textbf{0.6538} & \textbf{0.0565} & \textbf{0.6689} & \textbf{0.1019} \\
          \midrule
\multirow{3}{*}{GraphSage}    & random   & 0.5954 & 0.0406 & 0.5729 & 0.0957 \\
          & average  & 0.5979 & 0.0405 & 0.5898 & 0.1312 \\
          & \textbf{curriculum-based} & \textbf{0.6669} & \textbf{0.0819} & \textbf{0.5927} & \textbf{0.1340} \\
          \midrule
\multirow{3}{*}{BernNet}    & random   & 0.5565 & 0.0490 & 0.5150 & 0.0828 \\
          & average  & 0.5930 & 0.0384 & 0.5875 & 0.1150 \\
          & \textbf{curriculum-based} & \textbf{0.7147} & \textbf{0.0745} & \textbf{0.7016} & \textbf{0.1412} \\
\bottomrule
\end{tabular}
}
\end{table}

To evaluate the Curriculum Anomaly Augmentation strategy, we conducted an ablation study on four baseline models, focusing on the impact of different anomaly-guided embedding methods. We tested three weight assignment strategies: random, average, and curriculum-based, which dynamically adjusts embeddings for specific anomaly patterns. The results in Table \ref{tab:ablation} show that the curriculum-based strategy outperforms both random and average methods across all metrics. The latter failed to prioritize anomaly patterns at each training stage, resulting in less effective guidance. In contrast, the dynamic weight adjustment significantly improved performance by emphasizing core anomalies.

Additionally, training loss curves for the Reddit dataset (Figure \ref{fig:ep3-loss}) show that the curriculum-based method achieves smoother and lower loss convergence, demonstrating more effective guidance during training.

\begin{figure}[!t]
  \centering
  \includegraphics[width=0.99\linewidth]{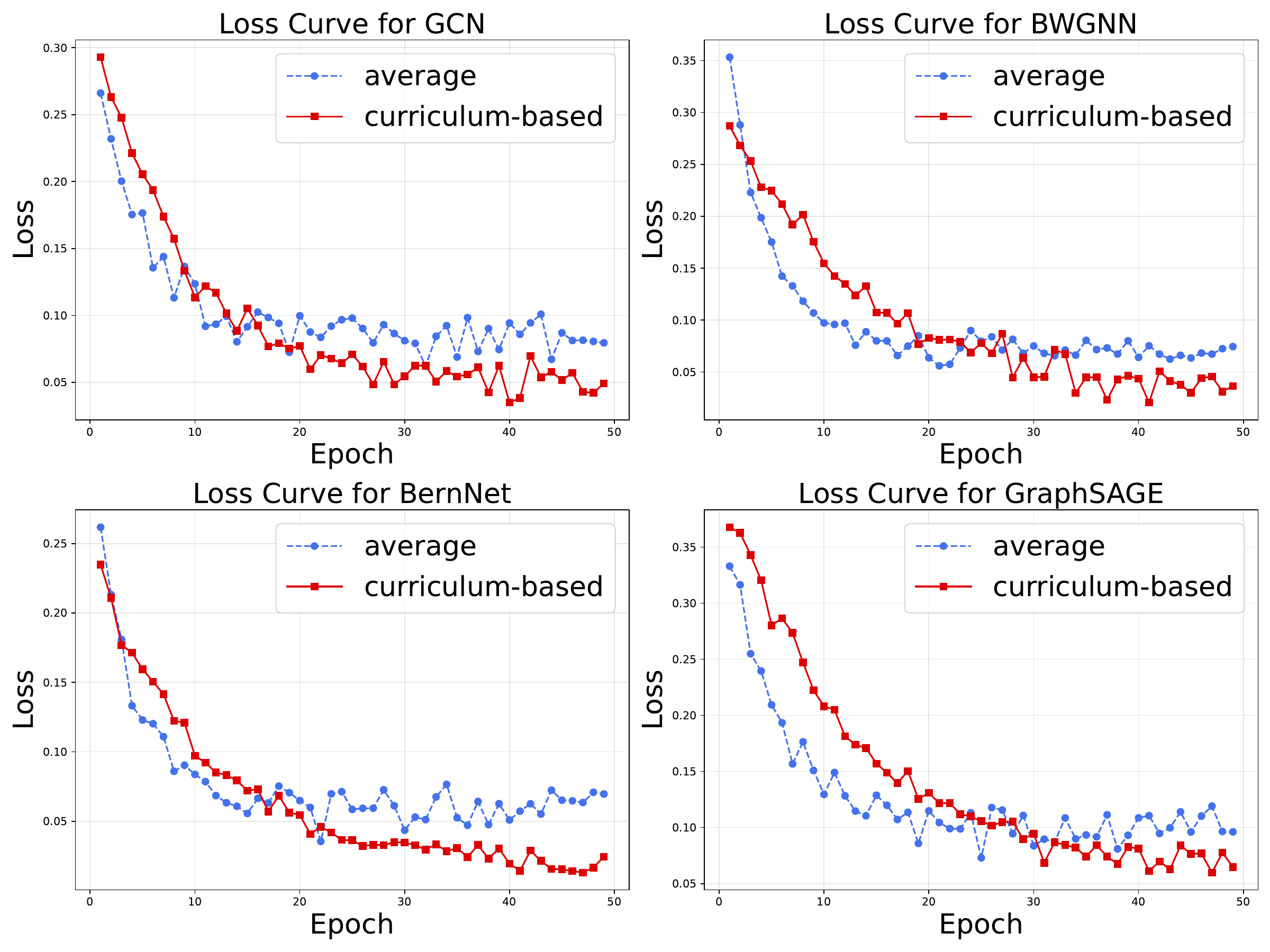}
  \caption{Comparison of model loss curves for different weighting strategy on reddit dataset}
  \label{fig:ep3-loss}
\end{figure}

\subsection{The Impact of BAED on data diversity~(RQ5)}
To evaluate the diversity of anomalous samples generated by BAED, we compared BAED-generated ego-graphs with randomly sampled ones for balancing positive and negative samples. Results in Figure~\ref{fig:ep4} show that BAED-generated ego-graphs significantly outperform random samples in anomaly detection, offering greater sample diversity. This highlights the effectiveness of BAED’s diffusion-based mechanism in generating anomaly-specific ego-graphs that capture the local structure of anomalous nodes, enhancing sample diversity and improving model generalization.

\begin{figure}[!t]
  \centering
  \includegraphics[width=0.99\linewidth]{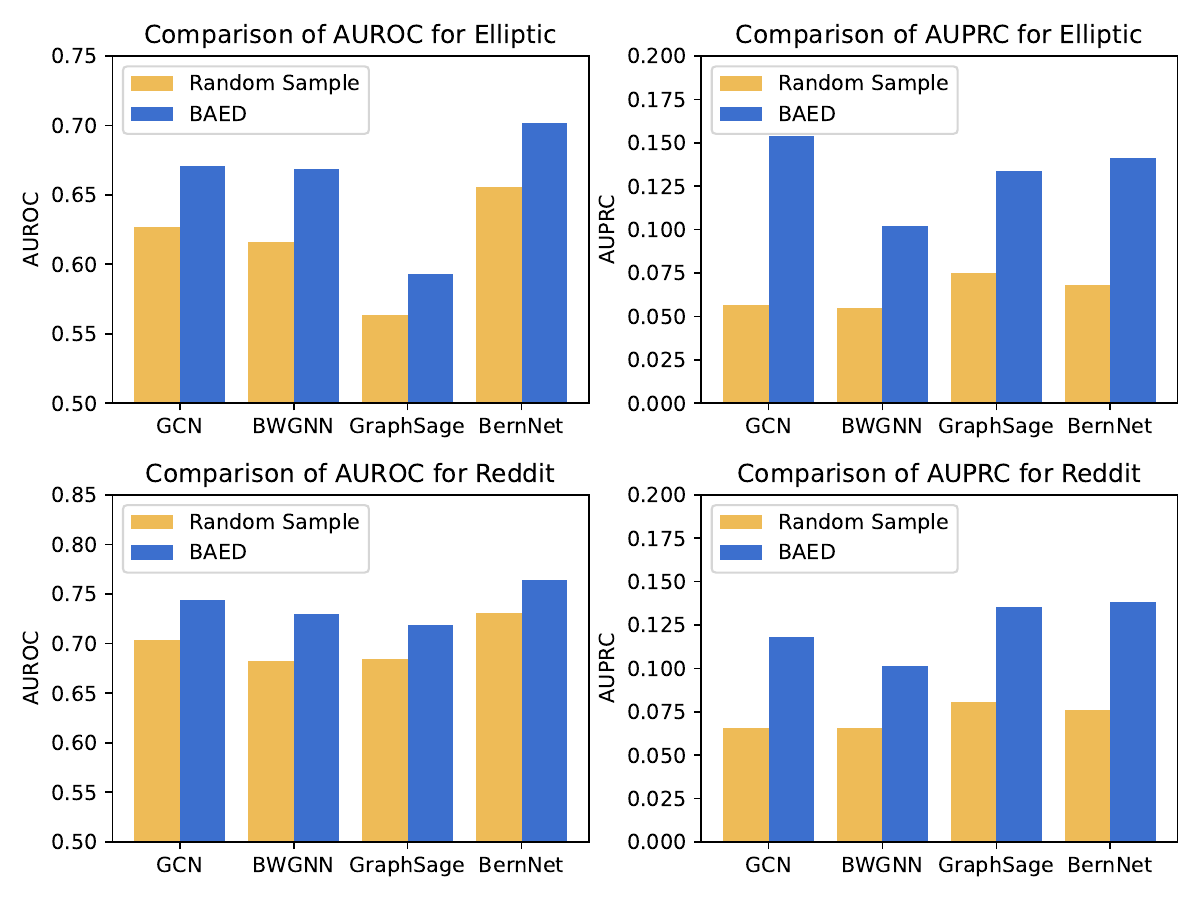}
  \caption{Comparison of different augmentation methods}
  \label{fig:ep4}
\end{figure}

\subsection{Visualization of Data Distribution~(RQ6)}
\begin{figure}[t]
	\centering
	\begin{subfigure}{0.49\linewidth}
		\centering
		\includegraphics[width=\linewidth]{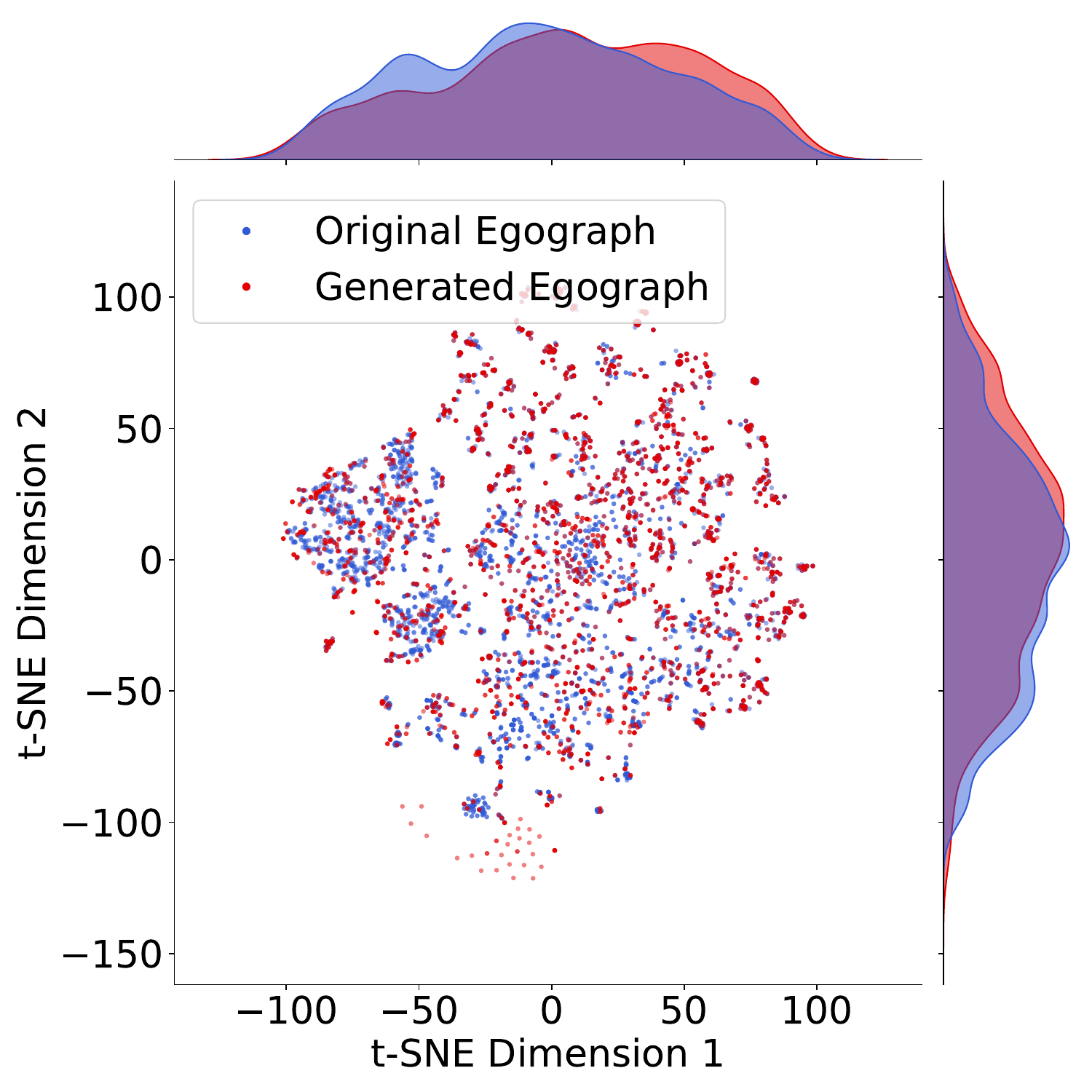}
		\caption{elliptic}
		\label{fig:ep5-a}
	\end{subfigure}
	\centering
	\begin{subfigure}{0.49\linewidth}
		\centering
		\includegraphics[width=\linewidth]{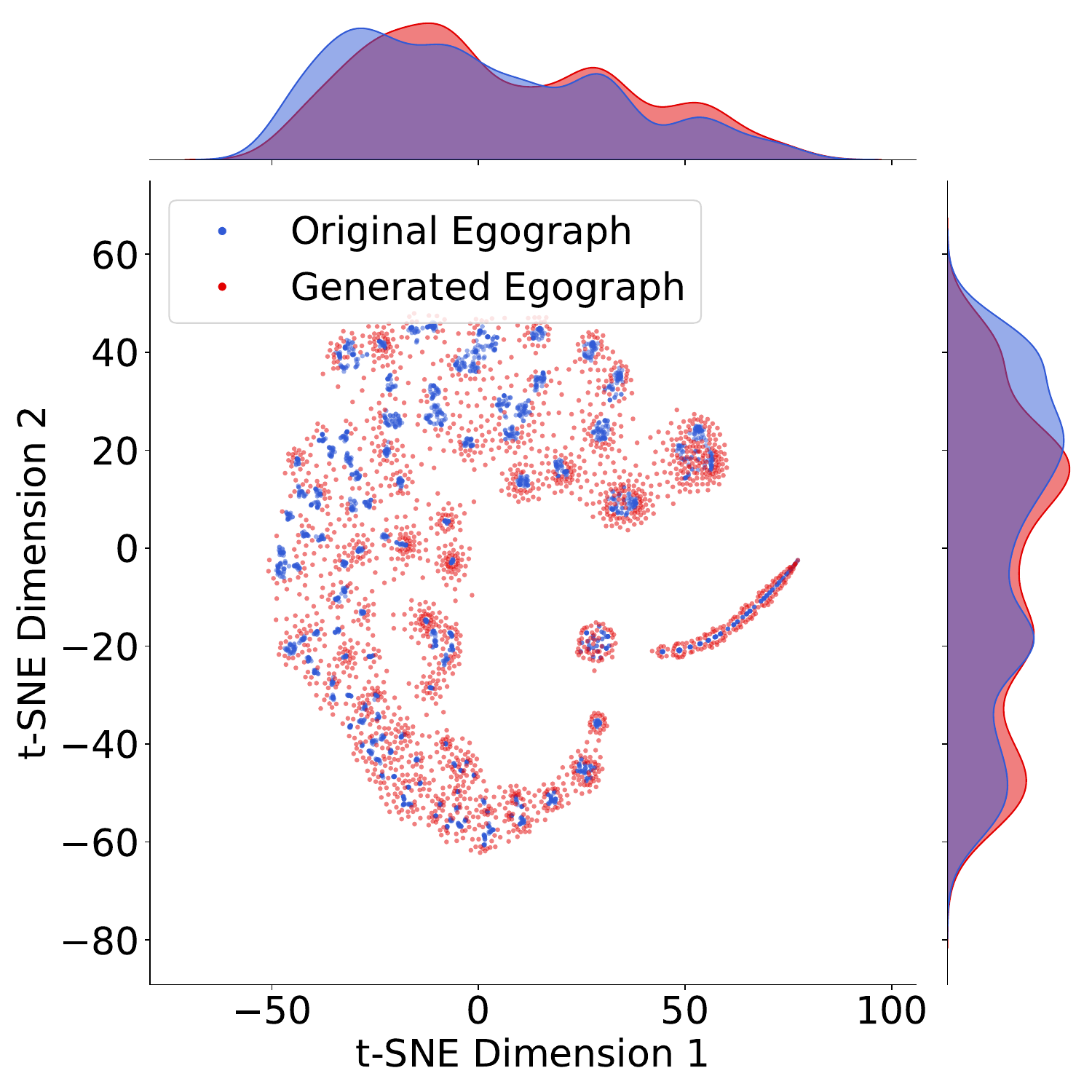}
		\caption{reddit}
		\label{fig:ep5-b}
	\end{subfigure}
	\centering
 \caption{Comparison of data distribution.}
 
 \label{fig:ep5}
\end{figure}

To compare the distribution of embeddings between generated and original anomalous ego-graphs, we used t-SNE for dimensionality reduction (Figure \ref{fig:ep5}). The results show that the embeddings of BAED-generated anomalous samples closely match those of the original samples, with nearly identical distributions. This indicates that BAED effectively preserves the distributional characteristics of the original data through its discrete diffusion process, ensuring the validity of the generated samples.

%% file: data/conclusion.tex
\section{Conclusions and Limitations}
\label{sec:conclusion}
This paper presented \textbf{Balanced Anomaly-Guided Ego-graph Diffusion} (BAED), a novel framework that addresses the dual challenges of dynamic graph structures and class imbalance in inductive graph anomaly detection. By combining a discrete ego-graph diffusion model with curriculum anomaly augmentation, BAED achieves dynamic adaptability and enhanced generalization to unseen graph data. Extensive experiments demonstrate its state-of-the-art performance, underscoring the potential of integrating topology-aware augmentation with adaptive learning strategies for anomaly detection tasks.

While BAED introduces significant advancements, it has a few limitations that warrant future exploration. First, the generated ego-graphs primarily focus on structural anomalies. Expanding the framework to address more nuanced feature-based anomalies could broaden its applicability. Second, BAED has been validated in the context of extreme class imbalance for anomaly detection tasks. Exploring its potential to handle other imbalance scenarios, such as imbalanced ego-graph classification or rare event prediction, could further demonstrate its versatility.

%% file: data/appendix.tex
\section{Algorithm}
\label{app:algorithm}

The implementation of BAED consists of three core phases: \textbf{Pretrained GAE} for representation learning; \textbf{Diffusion Model Training and Sample Generation} as detailed in Algorithms \ref{algo:inference} and \ref{alg:training_edge_pred_subgraph}; and \textbf{Anomaly Detection} according to the procedure in Algorithm~\ref{alg:gad}.



\begin{algorithm}[htbp]
\caption{BAED: Training of Diffusion}
\label{alg:training_edge_pred_subgraph}
\begin{algorithmic}[1]
\STATE \textbf{Input:} node features \( X^{0} \), topological structure \( A^{0} \) , diffusion steps \( T \), forward transition distributions \( q(A^{t} | A^{t-1}) \) and \( q(A^{t} | A^{0}) \), ego-graph embedding function \( \mathbf{e}^{t} = f_{\theta}(A^{t}) \).
\WHILE{training}
    \STATE Compute ego-graph embedding \( \mathbf{e}^{0} = f_{\theta}(X^{0},A^{0}) \)
    \STATE Sample \( t \sim \mathcal{U}[1, T] \), and set \( t - 1 \) immediately
    \STATE Draw \( A^{t-1} \sim q(A^{t-1} | A^{0}) \), then draw \( A^{t} \sim q(A^{t} | A^{t-1}) \)
    \STATE Perform gradient descent over \( \theta \) with gradient \( \nabla_{\theta} \log p_{\theta}(A^{t-1} | A^{t}, \mathbf{e}^{0}) \)
\ENDWHILE
\end{algorithmic}
\end{algorithm}

\begin{algorithm}[htbp]
\caption{Training of Graph Anomaly Detection}
\label{alg:gad}
\begin{algorithmic}[1]
\STATE  Initialize training parameters: learning rate \( \text{lr} \), number of epochs \( E \), optimizer \( \text{Adam} \), BAED denoising model.  
\STATE Split dataset into training set, validation set, and test set.
\FOR{$e = 1$ to $E$}
    \FOR{each batch \( \mathcal{B} \) in training set}
        \STATE Compute the difference in the number of positive and negative samples \(N\)
        \STATE Compute the anomaly-guidance embedding for negative samples in the batch.
        \IF{$e > 0$}
            \STATE Compute weight factor \( w_i \) and time factor \( h(t) \) for each sample \( i \) in the batch:\( w_i = \frac{1}{1 + e^{-\alpha (l_i - \beta)}} \cdot h(t) \).
        \ELSE
            \STATE Set weight factor \( w_i \) to the average value for all samples.
        \ENDIF
        \STATE Compute weighted embedding \( \tilde{\mathbf{h}}^{g} = \sum_{i=1}^N \frac{w_i}{\sum_{j=1}^N w_j} \cdot \text{GIN}(\mathcal{G}_K^i) \).
        \STATE Generate \( N \) negative samples using BAED denoising model with \(\tilde{\mathbf{h}}^{g}\).
        \STATE Combine the original negative samples with the generated samples in the batch.
        \STATE Compute cross-entropy loss \( \nabla_{\theta} \mathcal{L} \) to update model.
    \ENDFOR
\ENDFOR
\STATE \RETURN trained model.
\end{algorithmic}
\end{algorithm}


\section{Dataset Overview}
\label{Datasets}
We utilize five public datasets from diverse domains: Dgraph, T-Finance, Photo, Reddit, and Elliptic. Each exhibits a significant class imbalance between positive and negative samples, aligning with our research focus. Details are summarized in Table \ref{tab:dataset}.

\begin{table}[t]
\centering
\caption{The statistics of the graph anomaly detection datasets}
\label{tab:dataset}
\resizebox{\linewidth}{!}{ 
\begin{tabular}{ccccccc}
\toprule
\textbf{Dataset} & \#\textbf{Nodes} & \#\textbf{Edges} & \#\textbf{Features} & \#\textbf{Degree} & \#\textbf{Anomalies} & \textbf{Anomaly Ratio} \\
\midrule
Dgraph   & 3,700,550   & 4,300,999   & 17    & 1.16   & 15,509  & 1.3\%   \\
T-Finance & 39,357      & 21,222,543  & 10    & 539.23 & 1,803   & 4.6\%   \\
Photo    & 7,535       & 119,043     & 745   & 15.8   & 698     & 9.2\%   \\
Reddit   & 10,984      & 168,016     & 64    & 15.3   & 366     & 3.3\%   \\
Elliptic & 203,769     & 234,355     & 166   & 1.15   & 4,545   & 9.8\%   \\
\bottomrule
\end{tabular}
}
\end{table}

\section{Baseline Methods}
\label{Baselines}
In this section, we present and compare the properties of the baseline methods used in this paper.
\begin{itemize}
    \item \textbf{AEGIS}~\cite{ding2021inductive}: AEGIS introduces a graph discretization layer to learn anomaly-aware node representations through an attentional mechanism. It employs generative adversarial learning to enable inductive anomaly detection without retraining.
    
    \item \textbf{GGAD}~\cite{qiao2024generative}: GGAD is a semi-supervised method that enhances detection by generating pseudo-outlier nodes as negative samples. It optimizes these nodes using asymmetric local affinity and egocentric proximity priors to align with real anomalies.

    \item \textbf{CGenGA}~\cite{ma2024graph}: CGenGA utilizes a denoising diffusion model to generate synthetic nodes matching the original graph distribution. It enhances existing detectors by optimizing the generation process through classifier guidance.
\end{itemize}

\section{Experimental Details}
\label{ap:details_of_exp}
All experiments run on an NVIDIA Tesla A800 GPU using PyTorch and Adam optimizer. Results are averaged over five runs with random seeds {15, 42, 63, 87, 94}. 

\textbf{Diffusion Model.} Our conditional-guided generator uses node degree and an 8-head self-attention mechanism. It trains for 300 epochs with a diffusion dimension of 64, 128 steps, a learning rate of 1e-4, linear noise scheduling, and a dropout rate of 0.1.

\textbf{Training.} Batches are balanced using generated negative samples, while the test set retains the original imbalance. Synthetic node features combine original values with Gaussian noise $\mathcal{N}(0, 1)$. Downstream models train for 50 epochs, with learning rates tuned in {$10^{-3}$–$10^{-6}$}.

\section{Computational Efficiency for Real-time Graph Anomaly Detection}
\label{app:efficiency}
We analyze the time complexity of BAED's training phase. The per-epoch complexity is:
\begin{equation}
\mathcal{O}(N \cdot L \cdot \bar{e} \cdot d + B \cdot T \cdot \bar{e} \cdot d)
\end{equation}
where $N$ is the number of ego-graphs, $L$ the GNN layers, $\bar{e}$ the average edges per ego-graph, $d$ the feature dimension, $B$ the batch size, and $T$ the diffusion steps.

Therefore, BAED based on diffusion model demands high computational efficiency, especially for large-scale dynamic networks. We evaluate the runtime performance of all methods across five datasets under GPU acceleration, with results presented in Table~\ref{tab:runtimes}.

\begin{table}[ht]
\centering
\caption{Runtime comparison (seconds) across datasets on GPU}
\label{tab:runtimes}
\begin{tabular}{@{}lccccc@{}}
\toprule
\textbf{Model} & \textbf{Elliptic} & \textbf{Dgraph} & \textbf{Reddit} & \textbf{Photo} & \textbf{T-Finance} \\
\midrule
AEGIS & 5,761 & 2,247 & 261 & 498 & 15,258 \\
GGAD & 5,271 & 2,713 & 463 & 487 & 9,345 \\
CGenGA & 11,233 & 14,474 & 1,598 & 2,986 & 30,650 \\
BAED & \textbf{8,641} & \textbf{10,802} & \textbf{1,142} & \textbf{2,489} & \textbf{27,864} \\
\bottomrule
\end{tabular}
\end{table}
While BAED requires more time than lighter baselines (AEGIS, GGAD), it achieves a 27.98\% average runtime reduction compared to CGenGA. This gain is due to our anomaly-guided generation, which focuses the diffusion process. The improved detection performance justifies the computational cost for accuracy-critical applications like fraud detection.